\documentclass[sigconf, nonacm]{acmart}

\newcommand\vldbdoi{XX.XX/XXX.XX}

\newcommand\vldbavailabilityurl{https://github.com/FangXieLab/TP-TopK}

\usepackage{mathtools}
\usepackage{amsthm}

\usepackage{algorithm}
\usepackage{algpseudocode}

\usepackage{booktabs}
\usepackage{tabularx}
\usepackage{multirow}
\usepackage{makecell}
\usepackage{adjustbox}
\newtheorem{theorem}{Theorem}[section]
\newtheorem{lemma}[theorem]{Lemma}
\newtheorem{corollary}[theorem]{Corollary}
\newtheorem{proposition}[theorem]{Proposition}
\newtheorem{definition}[theorem]{Definition}
\newtheorem{assumption}[theorem]{Assumption}
\newtheorem{remark}[theorem]{Remark}
\usepackage{enumitem}
\setlist[itemize]{topsep=3pt, itemsep=2pt, parsep=0pt, partopsep=0pt}
\setlist[enumerate]{topsep=3pt, itemsep=2pt, parsep=0pt, partopsep=0pt}

\begin{document}
\title{When Do Fewer Coordinates Suffice in DP-SGD?}

\author{Huiqi Zhang}
\affiliation{%
  \institution{Guangdong Provincial Key Laboratory of IRADS, Beijing Normal-Hong Kong Baptist University}
  \streetaddress{P.O. Box 1212}
  \city{Zhuhai}
  \state{China}
  \postcode{519087}
}

\author{Fang Xie}
\orcid{}
\affiliation{%
  \institution{Guangdong Provincial Key Laboratory of IRADS, Beijing Normal-Hong Kong Baptist University}
  \streetaddress{P.O. Box 1212}
  \city{Zhuhai}
  \state{China}
  \postcode{519087}
}

\begin{abstract}
Differentially private stochastic gradient descent (DP-SGD) injects
noise into every updated coordinate, making the injected noise energy
scale with the ambient parameter dimension \(d\). We ask when private
training can update fewer coordinates without losing the signal needed
for optimization. We propose \textsc{TP-TopK} (Two-Phase TopK DP-SGD),
a two-phase method for coordinate-sparse private training without public
data, in which a private warm-up phase identifies a coordinate support
used to guide the main training phase. We give a criterion
characterizing when coordinate restriction can be beneficial, show via a
nonconvex stationarity bound that under this condition the relevant noise
term scales with the active dimension \(k\) rather than the full
parameter dimension \(d\), and provide a lower bound on the reliability
of warm-up-based coordinate ranking. Experiments on MNIST,
FMNIST, and CIFAR-10 show that learned coordinate supports can
retain more gradient energy than size-matched random supports, with the
largest gains when the active dimension is small and warm-up scores are
informative.
\end{abstract}

\maketitle


\ifdefempty{\vldbavailabilityurl}{}{
	\vspace{.3cm}
	\begingroup\small\noindent\raggedright\textbf{Artifact Availability:}\\
	The source code, data, and/or other artifacts have been made available at \url{\vldbavailabilityurl}.
	\endgroup
}

\section{Introduction}
\label{sec:introduction}
Deep learning has achieved remarkable success in tasks such as image recognition
~\cite{Lundervold2018AnOO, Zhou2020ARO, dosovitskiy2020vit, shamshad2023transformers},
text analysis~\cite{gmail2019, brown2020language, zhao2023survey},
and recommendation systems~\cite{Lu2012RecommenderS, wu2023survey},
largely due to centralized training on large-scale datasets.

However, this paradigm raises significant privacy concerns, as user data is 
exposed to leakage risks during both training and deployment---including 
membership inference attacks~\cite{shokri2017membership, hu2022membership, carlini2022membership} and 
gradient inversion attacks~\cite{zhu2019deep, yin2021see}.
Differential Privacy (DP)~\cite{dwork2006differential} provides formal privacy
guarantees by injecting calibrated noise to obscure the influence of any
individual data point. Since DP was introduced to deep learning~\cite{shokri2015privacy},
Differentially Private Stochastic Gradient Descent
(DP-SGD)~\cite{abadi2016deep} has emerged as the predominant method: per-sample
gradients are clipped and aggregated, followed by Gaussian noise injection to
satisfy formal $(\varepsilon, \delta)$-DP guarantees. Nevertheless, DP-SGD
suffers from inherent limitations. First, it injects uniform noise across all
gradient dimensions, and the noise magnitude grows with model size, impairing
convergence~\cite{bassily2014private}. Second, the privacy budget accumulates
linearly over long training runs, further degrading model utility. Third, noise
is indiscriminately added to low-importance parameters, weakening the signal
carried by critical gradients~\cite{chen2023differentially}.

The root cause of these limitations is the dimensionality dependence of the
injected noise. For isotropic Gaussian perturbation
$z \sim \mathcal{N}(0, \sigma^2 C^2 I_d)$, we have
$\mathbb{E}\|z\|_2^2 = d\sigma^2 C^2$. As the parameter dimension $d$ grows, this scaling directly degrades utility in high-dimensional deep models.

We study the \emph{pure private-data} setting — no public data, no pretrained checkpoint, training from scratch — where this dimension dependence is the central obstacle. Public-data and pretrained pipelines are powerful, but they assume public
representations that are both privacy-appropriate and aligned with the private
task; this assumption routinely fails in sensitive, domain-specific settings such as
clinical records, financial risk modeling, rare-disease analysis, and private
enterprise data~\citep{tramer2024considerations,hod2025surrogate}. Empirical evidence from medical imaging further confirms that 
DP noise disproportionately affects model utility in sensitive 
domains~\citep{mohammadi2026dp}, motivating methods that 
concentrate the privacy cost on the most informative parameters. We therefore ask whether noisy gradients from private training carry 
sufficient signal to identify a high-energy coordinate support. 
Figure~\ref{fig:motivation} answers affirmatively: gradient energy 
is sharply concentrated in a small coordinate subset~(a), and 
\textsc{TP-TopK} warm-up scores reliably recover high-energy supports 
even at $\varepsilon = 1$, evaluated against the oracle gradient energy~(b). 
This motivates \textsc{TP-TopK}: a sparse private training mechanism 
that restricts both optimization and noise injection to the learned 
support, reducing the effective noise dimension from $d$ to $k \ll d$.

\begin{figure*}[t]
    \centering
    \adjustbox{trim={0 0.50\height{} 0 0},clip}{
        \includegraphics[width=\textwidth]{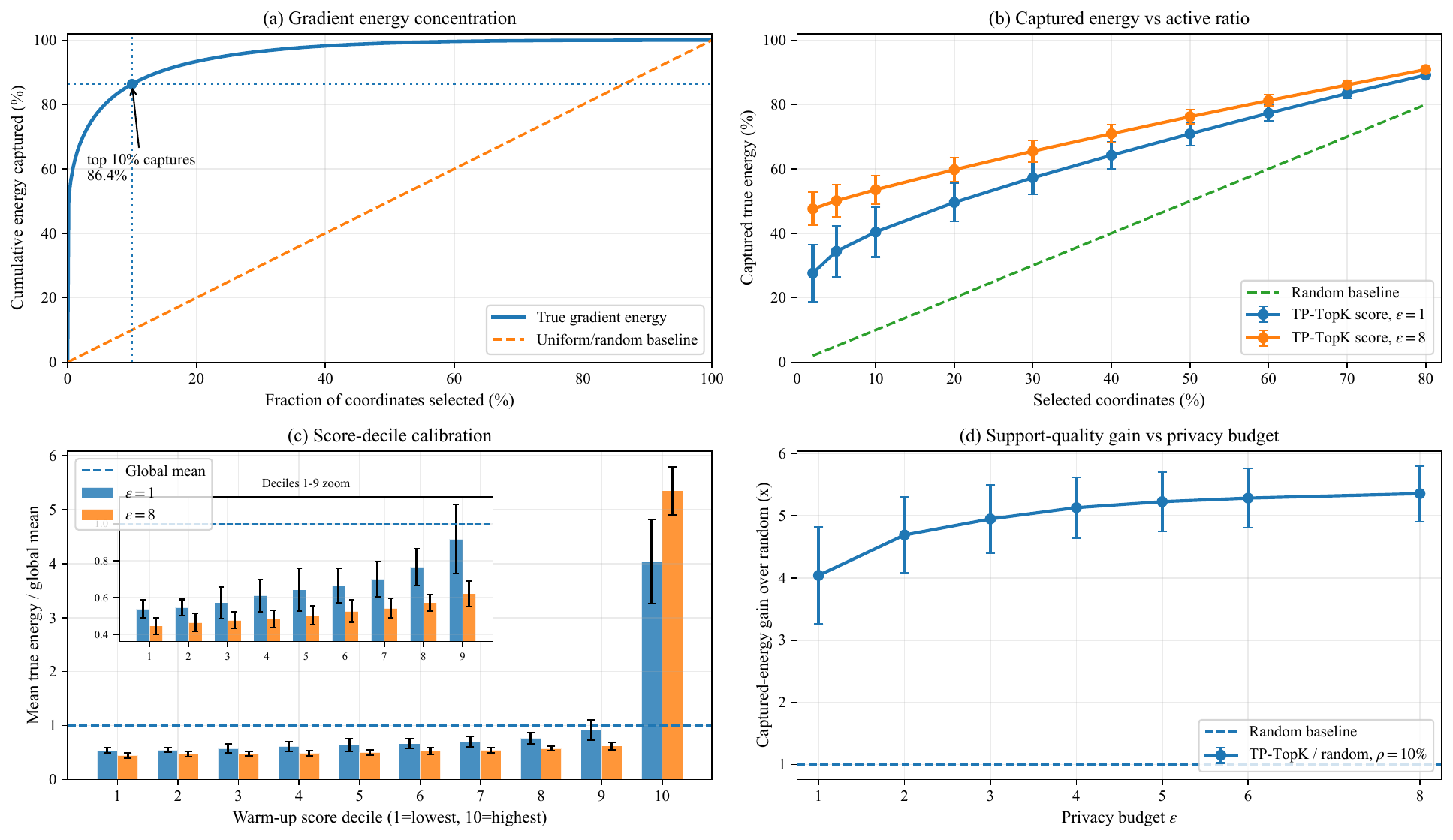}
    }
    \caption{
    Empirical motivation for \textsc{TP-TopK}. Gradient energy is 
defined as the per-coordinate squared gradient magnitude $E_j = g_j^2$, 
averaged over non-private mini-batches as a post-hoc oracle diagnostic.
(a)~The top $10\%$ of coordinates capture $86.4\%$ of total gradient energy, 
far exceeding a random selection of equal size.
(b)~Supports selected by \textsc{TP-TopK} warm-up scores recover 
substantially more gradient energy than random supports of the same 
size, even at $\varepsilon = 1$.
    }
    \label{fig:motivation}
\end{figure*}

The core contributions of this work are as follows:
\begin{itemize}
    \item We propose \textsc{TP-TopK}, a two-phase coordinate-sparse
    differentially private training framework. A short DP warm-up phase
    scores each coordinate using privatized gradient aggregates and
    selects a support set; Phase~2 then runs DP-SGD restricted to the
    support. This reduces the per-step optimizer-facing DP noise energy from
\(d\nu_2^2\) to \(k\nu_2^2\), where
\(\nu_2^2=(\sigma_2 C_2/B)^2\) is the per-coordinate variance of the
averaged Phase~2 Gaussian noise and \(k \ll d\).

    \item We provide three theoretical results: (1) a proof that
    \textsc{TP-TopK} satisfies $(\varepsilon, \delta)$-differential
    privacy; (2) a smooth nonconvex convergence bound in which the
    DP-noise term scales with active dimension $k$ rather than full
    dimension $d$; and (3) a one-step criterion characterizing when
    coordinate restriction improves utility, together with a misranking
    probability bound identifying conditions under which the learned
    support reliably outperforms random selection.

    \item We evaluate \textsc{TP-TopK} on MNIST, FMNIST, and
CIFAR-10 through a controlled five-method comparison
(\textsc{TP-TopK}, \textsc{TP-Rand}, ON-TopK,
ON-Rand~\cite{zhu2023improving}, full DP-SGD), with all
methods matched on active ratio, training budget, and privacy
accounting to isolate the contribution of coordinate selection
quality. We further validate our method on EyePACS diabetic
retinopathy screening~\citep{eyepacs}, a sensitive medical
domain where public proxy data is unavailable, demonstrating
recovery of minority-class recall that collapses under full
DP-SGD.
\end{itemize}

\section{Related Work}
\label{sec:related-work}

\paragraph{Differentially private optimization.}
\citet{dwork2006differential} introduced differential privacy as a
formal guarantee that any single example has bounded influence on an
algorithm's output. Early work on private SGD includes
\citet{song2013stochastic} and \citet{bassily2014private};
\citet{abadi2016deep} adapted this framework to deep learning through
DP-SGD, which combines per-example gradient clipping with Gaussian
noise injection. Subsequent work has improved privacy accounting
\citep{mironov2017renyi,wang2019subsampled} and clipping strategies
\citep{andrew2021differentially,bu2023automatic}. However, the utility of
DP-SGD remains sensitive to the dimension of the optimized parameter space:
for isotropic Gaussian perturbations, the Euclidean noise magnitude scales as
\(\sigma C\sqrt{d}\), making high-dimensional private optimization difficult
\citep{bassily2014private}. Architectural and hyperparameter choices can
partly mitigate this issue \citep{de2022unlocking}.

\paragraph{Noise mitigation in a fixed parameter space.}
Several methods reduce the effect of DP noise while keeping the full
parameter space active. DOPPLER applies frequency-domain filtering to
attenuate high-frequency DP noise while preserving gradient signal
\citep{zhang2024doppler}. DiSK uses a simplified Kalman filter to
denoise sequences of privatized updates \citep{zhang2025disk}. DPDR
decomposes gradients into reusable and incremental components and
allocates more privacy budget to the latter \citep{liu2024dpdr}.
These methods are complementary to ours: they aim to recover useful
signal after privatization in the original parameter space, whereas
our method reduces the dimension of the space in which Phase~2 noise
is injected. A complementary direction, orthogonal to both noise-filtering approaches and our method, replaces isotropic Gaussian noise with structured correlated noise via matrix mechanisms
\citep{kairouz2021practical, choquette2023amplified,
choquette2025nearexact}.

\paragraph{Subspace, sparse, and selective private training.}
A closer line of work restricts private updates to a lower-dimensional
subspace or parameter subset. Public-data subspace methods use
auxiliary data to identify a low-dimensional gradient subspace for
private optimization \citep{zhou2021bypassing, yu2021gep, yu2021large},
while low-rank reparameterization reduces the number of trainable
degrees of freedom \citep{yu2021large}. Other methods sparsify
gradients before noise addition \citep{zhu2023improving} or select
and freeze parameters using public data
\citep{adamczewski2023differential}. Theoretical work has studied the
convergence implications of gradient clipping under stochastic bias
\citep{koloskova2023revisiting} and the benefits of dimension
reduction under gradient sparsity \citep{ghazi2023sparsity}. SPARTA 
selects a sparse fine-tuning mask via private gradient 
information~\citep{makni2025sparta}, but targets fine-tuning from a 
public pretrained checkpoint---a setting that \citet{tramer2024considerations} 
argue may be unavailable or privacy-inappropriate in sensitive domains. 
Recent work has also highlighted that standard DP benchmarks may not 
reflect the challenges of sensitive deployment 
settings~\citep{mokhtari2026rethinking}, and that sparse model structures 
can improve the signal-to-noise ratio under DP 
constraints~\citep{benz2023equivariant}. \textsc{TP-TopK} is designed 
to address precisely these concerns: it trains from scratch without any 
public data, directly targeting the sensitive deployment settings where 
existing methods fail.

\section{Our Method}
\label{sec:Method}

\subsection{Problem Setting}
\label{sec:problem-setting}

Let \(D=\{(x_i,y_i)\}_{i=1}^N\) denote a private training set, where
\(x_i\) is an input and \(y_i\) is its label. Let
\(\theta\in\mathbb{R}^d\) be the vector of all trainable scalar parameters,
and let \(\ell(\theta;x_i,y_i)\) be the per-example loss. We consider
empirical risk minimization with objective
\[
  L_D(\theta)
  =
  \frac{1}{N}\sum_{i=1}^N \ell(\theta;x_i,y_i).
\]
The learning algorithm must output a private model \(\hat\theta\) that
retains high test performance while satisfying
\((\varepsilon,\delta)\)-differential privacy
\citep{dwork2014algorithmic,abadi2016deep}; informally, changing one
training example should have only a limited effect on the output
distribution.

At iteration \(t\), DP-SGD samples a mini-batch index set
\(\mathcal{B}_t\subseteq\{1,\ldots,N\}\) of size \(B\), computes per-example gradients
\[
  g_{t,i}
  =
  \nabla_\theta \ell(\theta_t;x_i,y_i),
  \qquad i\in\mathcal{B}_t,
\]
and clips each gradient to an \(\ell_2\)-norm threshold \(C\):
\[
  \bar{g}_{t,i}
  =
  g_{t,i}
   \cdot\min\!\left\{1,\frac{C}{\|g_{t,i}\|_2}\right\}.
\]
The clipped gradients are aggregated and perturbed with Gaussian
noise,
\[
  \tilde{g}_t
  =
  \frac{1}{B}
  \!\left(
    \sum_{i\in\mathcal{B}_t}\bar{g}_{t,i}
    +
    z_t
  \right),
  \qquad
  z_t\sim\mathcal{N}(0,\sigma^2C^2I_d),
\]
and the model is updated by
\[
  \theta_{t+1}
  =
  \theta_t-\eta_t\tilde{g}_t,
\]
where \(\eta_t>0\) is the learning rate at step \(t\).

We track cumulative privacy loss using subsampled R\'{e}nyi differential
privacy (RDP) accounting \citep{mironov2017renyi,wang2019subsampled, gopi2021numerical} and
convert the resulting guarantee to \((\varepsilon,\delta)\)-DP. 

The privacy loss of DP-SGD is governed by the sampling rate \(B/N\), the
noise multiplier \(\sigma\), the number of training steps \(T\), and
\(\delta\). However, the magnitude of the injected noise seen by the
optimizer depends on the ambient parameter dimension. For
\(z_t\sim\mathcal{N}(0,\sigma^2C^2I_d)\),
\[
  \mathbb{E}
  \left\|\frac{1}{B}z_t\right\|_2^2
  =
  \frac{d\sigma^2C^2}{B^2}.
\]
Thus, for fixed \(\sigma\), \(B/N\), \(T\), and \(\delta\), increasing \(d\)
does not increase the privacy loss, but it does increase
\(\mathbb{E}\|\frac{1}{B}z_t\|_2^2\) proportionally. This dimension
dependence motivates reducing the effective coordinate dimension in which
DP noise is injected.

We represent a coordinate-sparse training configuration by an active
coordinate support $A\subseteq\{1,\ldots,d\}$, or equivalently by a
binary mask $m\in\{0,1\}^d$ with $m_p=1[p\in A]$ for $p\in\{1,\ldots,d\}$. We define
\[
  k=|A|,
  \qquad
  \rho(A)=\frac{k}{d},
  \qquad
  \bar{A}=\{1,\ldots,d\}\setminus A,
\]
where \(\rho(A)\) is the active ratio, i.e., the fraction of coordinates
updated in Phase~2, and \(\bar{A}\) is the frozen complement.

In our two-phase method, Phase~1 produces a coordinate score vector
\(a\in\mathbb{R}^d\), and the active support is selected as
\(A=\operatorname{TopK}(a,k)\), the set of \(k\) coordinates with the
largest scores. When discussing random-support baselines or oracle
diagnostics, we explicitly write \(A_{\mathrm{rand}}\) and \(A^\star\),
respectively.

\subsection{Two-Phase TopK DP-SGD}
\label{sec:method}
A smaller active support reduces the dimension-dependent DP noise seen
by the optimizer, but introduces projection bias by discarding gradient
signal outside $A$, namely on $\bar A$. The central design question is
how to choose $A$ so that the noise reduction outweighs the signal loss.

We propose \textsc{TP-TopK}, a two-phase coordinate-sparse private training
procedure: a short DP warm-up phase discovers a coordinate support from
private gradient statistics, and a second DP-SGD phase trains only on the
selected support. Figure~\ref{fig:two-phase-pipeline} illustrates the full
two-phase training pipeline and the corresponding RDP accounting structure.
Algorithm~\ref{alg:TP-TopK} summarizes the full procedure.

\begin{figure}[t]
\centering
\includegraphics[width=\linewidth]{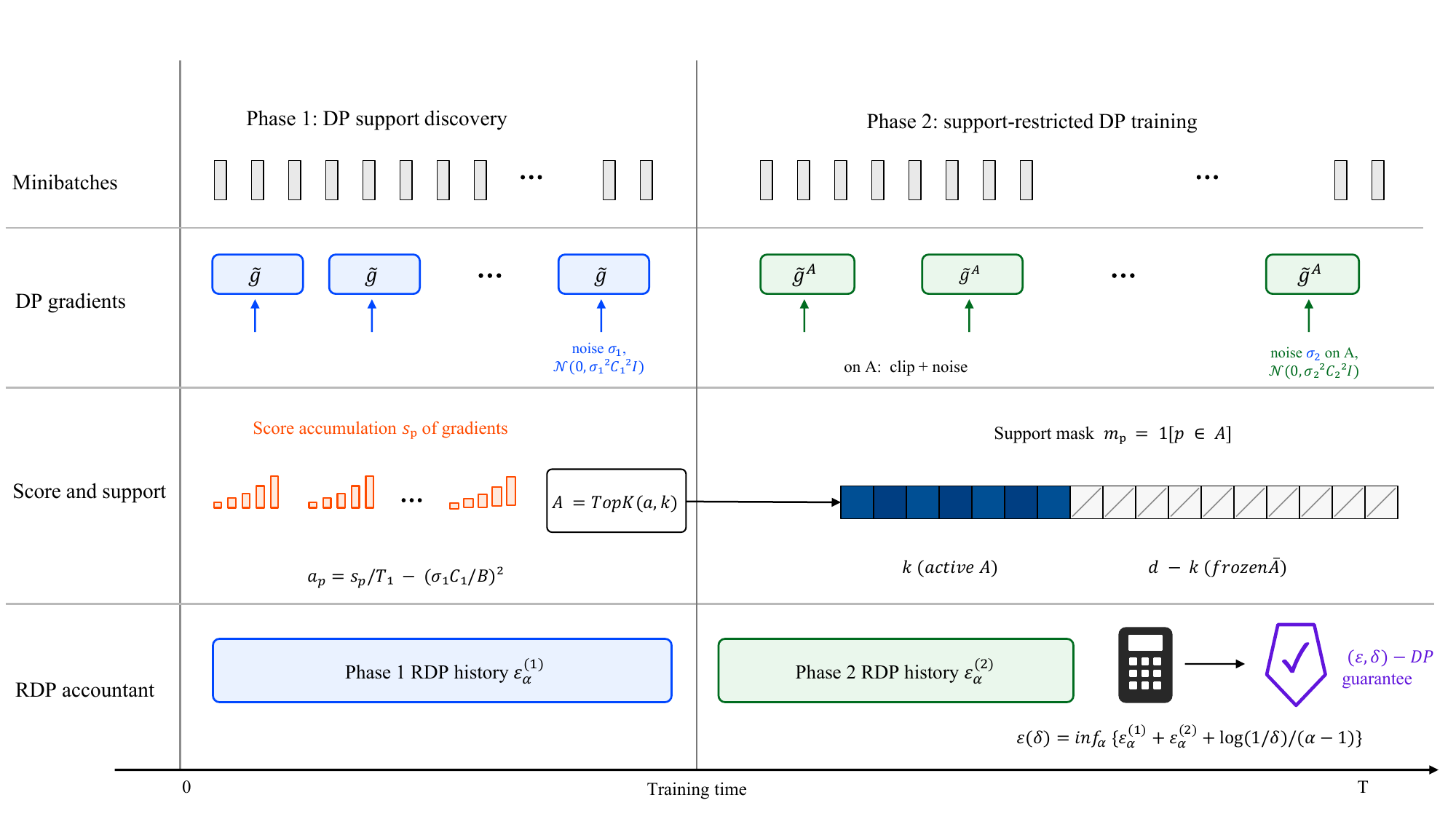}
\caption{
Two-phase training pipeline of \textsc{TP-TopK} and its RDP accounting
structure. Phase~1 runs full-parameter DP-SGD with noise multiplier
$\sigma_1$, accumulating coordinate scores $s_p$; the denoised score
$a_p = s_p/T_1 - (\sigma_1 C_1/B)^2$ is post-processed to select
$A = \operatorname{TopK}(a,k)$ at no additional privacy cost.
Phase~2 restricts DP-SGD to the $k$ active coordinates in $A$ with
noise multiplier $\sigma_2$, leaving frozen coordinates $\bar{A}$
unchanged. The two RDP histories compose additively to yield the
end-to-end $(\varepsilon,\delta)$-DP guarantee of
Theorem~\ref{thm:privacy}.
}
\label{fig:two-phase-pipeline}
\end{figure}

\begin{algorithm}[t]
\small
\caption{\textsc{TP-TopK}: Two-Phase TopK DP-SGD}
\label{alg:TP-TopK}
\begin{algorithmic}[1]
\Require Dataset $D$; initial parameters $\theta_0 \in \mathbb{R}^d$;
  steps $T_1, T_2$; batch size $B$;
  clipping norms $C_1, C_2$; noise multipliers $\sigma_1, \sigma_2$;
  learning rates $\eta_t$; active ratio $\rho$.
\Ensure Final private model $\theta$.

\Statex \textbf{Phase 1: DP warm-up guided support discovery}

\State $\theta \gets \theta_0$,\quad $s \gets \mathbf{0} \in \mathbb{R}^d$
\For{$t = 1, \ldots, T_1$}
  \State Sample $\mathcal{B} \subseteq \{1,\ldots,N\}$ of size $B$
  \For{$i \in \mathcal{B}$}
    \State $\bar{g}_i \gets g_i \cdot\min\!\left(1, C_1/\|g_i\|_2\right)$
      where $g_i = \nabla_\theta \ell(\theta; x_i, y_i)$
  \EndFor
  \State $\tilde{g} \gets \frac{1}{B}\!\left(\sum_{i \in \mathcal{B}} \bar{g}_i + z\right)$,\quad $z \sim \mathcal{N}(0, \sigma_1^2 C_1^2 I_d)$
  \State $\theta \gets \theta - \eta_t \tilde{g}$,\quad $s \gets s + \tilde{g}^2$
\EndFor
\State $\theta^{(1)} \gets \theta$
\State $a_p \gets s_p / T_1 - (\sigma_1 C_1 / B)^2$ for $p \in \{1,\ldots,d\}$
\State $k \gets \lfloor \rho d \rfloor$,\quad $A \gets \operatorname{TopK}(a, k)$,\quad $m_p \gets \mathbf{1}[p \in A]$

\Statex
\Statex \textbf{Phase 2: Support-restricted masked DP-SGD}

\State $\theta \gets \theta^{(1)}$
\For{$t = 1, \ldots, T_2$}
  \State Sample $\mathcal{B} \subseteq \{1,\ldots,N\}$ of size $B$
  \For{$i \in \mathcal{B}$}
    \State $\bar{g}^A_i \gets (m \odot g_i) \cdot \min\!\left(1, C_2 / \|m \odot g_i\|_2\right)$
  \EndFor
  \State $\widetilde{g}^A \gets \frac{1}{B}\!\left(\sum_{i \in \mathcal{B}} \bar{g}^A_i + m \odot z\right)$,\quad $z \sim \mathcal{N}(0, \sigma_2^2 C_2^2 I_d)$
  \State $\theta \gets \theta - \eta_t\, \widetilde{g}^A$
\EndFor
\State \Return $\theta$
\end{algorithmic}
\end{algorithm}

\paragraph{Phase 1: DP warm-up guided support discovery.}
We run full-parameter DP-SGD for \(T_1\) steps. At each step, per-example
gradients are clipped to norm \(C_1\), aggregated, and perturbed with
isotropic Gaussian noise to produce the averaged privatized gradient
\(\tilde{g}_t\).
We score each coordinate by its denoised squared privatized gradient
accumulated over the warm-up trajectory. The score for coordinate \(p\) is
\[
  a_p
  =
  \frac{1}{T_1}\sum_{t=1}^{T_1}\tilde{g}_{t,p}^{\,2}
  -
  \left(\frac{\sigma_1 C_1}{B}\right)^2,
\]
where the subtracted term is the per-coordinate DP noise variance. Thus $a_p$ is a noise-floor-corrected estimate of the coordinate-wise 
squared gradient signal; scores may be negative for low-signal coordinates, 
but this is not problematic as they are used only as ranking values. 
See Lemma~\ref{lem:score-unbiased} for unbiasedness under the simplified 
additive model, and Proposition~\ref{prop:rankability} for the resulting 
ranking reliability. Because each \(\tilde{g}_t\) is already a
DP output, score computation and support selection are post-processing and
incur no additional privacy cost. The active support is
\(A=\operatorname{TopK}(a,k)\) with binary mask \(m_p=1[p\in A]\).

\paragraph{Phase 2: support-restricted masked DP-SGD.}
Phase~2 starts from the warm-up checkpoint \(\theta^{(1)}\) and
keeps \(A\) fixed throughout. At each step, the per-example gradient is masked to the active support, $g^A_{t,i} = m\odot g_{t,i},$
then clipped to norm \(C_2\),
\[
  \bar{g}^A_{t,i}
  =
  g^A_{t,i}
  \cdot
  \min\!\left\{1,\frac{C_2}{\|g^A_{t,i}\|_2}\right\}.
\]
The clipped gradients are aggregated and Gaussian noise is injected only on
the active coordinates,
\[
  \tilde{g}_t^A
  =
  \frac{1}{B}
  \!\left(
    \sum_{i\in\mathcal{B}_t}\bar{g}^A_{t,i}
    +
    m\odot z_t
  \right),
  \qquad
  z_t\sim\mathcal{N}(0,\sigma_2^2C_2^2I_d),
\]
and the model is updated by
\[
  \theta_{t+1}
  =
  \theta_t - \eta_t\,\tilde{g}_t^A.
\]
Coordinates in \(\bar{A}\) receive neither gradient updates nor added noise.
Equivalently, Phase~2 applies the Gaussian mechanism on the
\(k\)-dimensional active subspace and embeds the result back into
\(\mathbb{R}^d\) deterministically, so the standard subsampled Gaussian RDP
accountant applies directly. The averaged noise energy is
\[
  \mathbb{E}
  \left\|\frac{1}{B}m\odot z_t\right\|_2^2
  =
  \frac{k\sigma_2^2C_2^2}{B^2},
\]
compared with \(d\sigma_2^2C_2^2/B^2\) for full-parameter DP-SGD. Phase~2
therefore reduces the optimizer-facing DP noise energy by the active ratio
\(\rho(A)=k/d\).

The end-to-end privacy guarantee for both phases is analyzed in 
Section~\ref{sec:privacy-analysis}.

\section{Privacy Analysis}
\label{sec:privacy-analysis}

The algorithm releases quantities from two phases.
Phase~1 releases the privatized transcript $\mathcal{T}_1$;
Phase~2 queries $D$ again through a support-restricted
DP-SGD mechanism and is accounted for separately.
The total cost follows from adaptive RDP composition
\citep{mironov2017renyi}.

\begin{theorem}[End-to-end privacy guarantee]
\label{thm:privacy}
Fix $\delta \in (0,1)$.  Suppose Phase~1 runs $T_1$ steps of the
Poisson-subsampled Gaussian mechanism with sampling rate $q_1$,
clipping norm $C_1$, and noise multiplier $\sigma_1$, incurring
RDP cost $\varepsilon_\alpha^{(1)}$ at order $\alpha > 1$.
Let $\theta^{(1)}$, $a$, $A$, and $m$ be computed from
the Phase~1 transcript as post-processing, incurring no additional
privacy cost.  Suppose Phase~2 runs $T_2$ steps of
support-restricted DP-SGD with sampling rate $q_2$, clipping norm
$C_2$, and noise multiplier $\sigma_2$, incurring RDP cost
$\varepsilon_\alpha^{(2)}$ at order $\alpha$.  Then the two-phase
mechanism satisfies $(\varepsilon(\delta),\delta)$-DP with
\[
  \varepsilon(\delta)
  =
  \inf_{\alpha > 1}
  \left\{
    \varepsilon_\alpha^{(1)}
    + \varepsilon_\alpha^{(2)}
    + \frac{\log(1/\delta)}{\alpha - 1}
  \right\}.
\]
\end{theorem}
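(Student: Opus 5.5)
The argument is a standard RDP composition followed by the RDP-to-$(\varepsilon,\delta)$ conversion. The one point needing care is that Phase~2 is chosen adaptively, through the data-dependent support $A$ and the starting point $\theta^{(1)}$.

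First, view Phase~1 as an adaptive sequence of $T_1$ Poisson-subsampled Gaussian mechanisms. Each step's mechanism depends on the current iterate, which is itself a function of earlier privatized outputs. By assumption, the whole Phase~1 transcript $\mathcal{T}_1=(\tilde g_1,\dots,\tilde g_{T_1})$ satisfies $(\alpha,\varepsilon^{(1)}_\alpha)$-RDP; this is what the subsampled Gaussian accountant of \citet{mironov2017renyi,wang2019subsampled} provides via adaptive composition.

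Next, note that $\theta^{(1)}$, the scores $a$, the support $A=\operatorname{TopK}(a,k)$ and the mask $m$ are deterministic functions of $\mathcal{T}_1$ and $\theta_0$. By the post-processing property of RDP, releasing them costs nothing beyond $\varepsilon^{(1)}_\alpha$.

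For Phase~2, fix any value of the auxiliary input $(\theta^{(1)},m)$. I would show that each step is a subsampled Gaussian mechanism on the $k$ active coordinates, with $\ell_2$ sensitivity $C_2$.

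\begin{itemize}
\item Replacing or removing one example changes the summed masked, clipped gradient by at most $C_2$ in norm, since $\|\bar g^A_{t,i}\|_2\le C_2$.
\item The noise $m\odot z_t$ is exactly $\mathcal N(0,\sigma_2^2C_2^2 I_k)$ on the coordinates in $A$ and is identically zero off $A$.
\item The frozen coordinates are data-independent given $\theta^{(1)}$.
\end{itemize}

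So the released update is a deterministic embedding of a $k$-dimensional Gaussian-mechanism output. The Rényi divergence bound of the subsampled Gaussian mechanism does not depend on dimension, so the accountant yields $(\alpha,\varepsilon^{(2)}_\alpha)$-RDP for the Phase~2 transcript. This holds uniformly over every fixed auxiliary input $(\theta^{(1)},m)$.

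Then apply the adaptive composition theorem of RDP \citep{mironov2017renyi}. If $M_1$ is $(\alpha,\varepsilon_1)$-RDP and $M_2(\cdot\,;o_1)$ is $(\alpha,\varepsilon_2)$-RDP for every $o_1$, then $(M_1,M_2)$ is $(\alpha,\varepsilon_1+\varepsilon_2)$-RDP. The full output $(\mathcal{T}_1,\mathcal{T}_2)$ is therefore $(\alpha,\varepsilon^{(1)}_\alpha+\varepsilon^{(2)}_\alpha)$-RDP, and the final $\theta$ is post-processing of it. The standard conversion says $(\alpha,\varepsilon_\alpha)$-RDP implies $(\varepsilon_\alpha+\log(1/\delta)/(\alpha-1),\delta)$-DP. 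This holds for every $\alpha>1$ simultaneously, so taking the infimum over $\alpha$ gives the stated $\varepsilon(\delta)$.

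The main obstacle is making the adaptivity rigorous. The Phase~2 mechanism depends on $D$ both directly and through $m$ and $\theta^{(1)}$, and the composition theorem needs the Phase~2 RDP bound to hold for every fixed value of this auxiliary input. I would handle this by stating the per-step sensitivity and noise argument conditionally on $(\theta^{(1)},m)$ and noting that the bound does not depend on $A$. A secondary subtlety concerns the case $\|m\odot g_i\|_2=0$: there the clipping factor should be read as $1$, and the sensitivity bound still holds.
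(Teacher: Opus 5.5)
Your proposal is correct and follows essentially the same route as the paper's proof. Both proceed by Phase~1 RDP accounting, then post-processing for $(\theta^{(1)},a,A,m)$, then a Phase~2 RDP bound that holds uniformly over every fixed Phase~1 transcript (masking precedes clipping, so the sensitivity is $C_2$ independent of $A$, and the masked noise is a $k$-dimensional Gaussian mechanism followed by zero-padding), and finally adaptive composition, the RDP-to-DP conversion, and the infimum over $\alpha$. One small precision: the $C_2$ sensitivity holds under add/remove adjacency, which is the adjacency matching the Poisson-subsampled accountant, whereas under replacement the sum can change by up to $2C_2$, so drop ``replacing'' from your first bullet.
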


Explicit per-phase RDP costs and the practical PRV accountant
are given in Remark~\ref{rem:rdp-explicit}.
The full proof is given in Appendix~\ref{app:privacy-proof}.

\paragraph{Proof sketch.}
\emph{Phase~1.}  Each step clips per-example gradients to $C_1$
and adds Gaussian noise with multiplier $\sigma_1$.  By the
subsampled Gaussian RDP accountant \citep{mironov2017renyi},
$T_1$ steps satisfy $(\alpha,\varepsilon_\alpha^{(1)})$-RDP.

\emph{Post-processing (zero additional cost).}
$\theta^{(1)}$, $a$, $A$, and $m$ are deterministic
functions of $\mathcal{T}_1$ and do not access $D$
beyond what is already encoded in $\mathcal{T}_1$.
By post-processing immunity \citep{dwork2014algorithmic}, they
incur no additional privacy cost.

\emph{Phase~2.} Fix any Phase~1 transcript, so $A$ is determined.  
The key observation is that masking precedes clipping: the 
per-example sensitivity is $C_2$ regardless of the support size $k$, 
so the standard subsampled Gaussian RDP accountant applies on the 
$k$-dimensional active subspace. Adaptive RDP composition then gives 
the stated bound.

\section{Support-Quality Analysis}
\label{sec:support-quality-analysis}

We analyze when coordinate-sparse private training improves over
full-parameter DP-SGD. The argument proceeds in two steps:
a one-step proxy condition characterizing when coordinate restriction
is beneficial, and an analysis of when the Phase~1 DP warm-up scores
reliably identify the beneficial coordinates.

\paragraph{When is coordinate restriction beneficial?}
Fix a parameter vector $\theta$. Using the notation of
Section~\ref{sec:problem-setting}, let
\[
  \bar{g}_i
  =
  \nabla_\theta \ell(\theta; x_i, y_i)
  \cdot
  \min\!\left\{1,\frac{C_2}{\|\nabla_\theta \ell(\theta; x_i, y_i)\|_2}\right\}
\]
be the clipped per-example gradient at $\theta$ with Phase~2 clipping
norm $C_2$, and let
\[
  G = \frac{1}{N}\sum_{i=1}^{N} \bar{g}_i \in \mathbb{R}^d
\]
denote the full-data clipped gradient aggregate---the full-data,
noiseless analog of the per-step mini-batch aggregate in
Section~\ref{sec:method}.
Let $P_{\bar{A}}$ denote the coordinate projection onto the
inactive coordinates $\bar{A}$, and define the per-coordinate
Phase~2 DP noise variance as $\nu_2^2 = \left(\frac{\sigma_2 C_2}{B}\right)^2$.

\begin{proposition}[One-step coordinate-restriction criterion]
\label{prop:one-step-criterion}
The one-step squared error of a support-restricted update decomposes as
\[
  \mathcal{E}(A)
  =
  \underbrace{\|P_{\bar{A}}G\|_2^2}_{\text{signal loss}}
  +
  \underbrace{k\nu_2^2}_{\text{active-coord.\ DP noise}},
\]
while full-parameter DP-SGD has proxy error
$\mathcal{E}_{\mathrm{full}} = d\nu_2^2$. This follows by setting $A = \{1,\ldots,d\}$, giving $P_{\bar{A}}G = 0$ and noise term $d\nu_2^2$.
The condition $\mathcal{E}(A) < \mathcal{E}_{\mathrm{full}}$ reduces to
\[
  \|P_{\bar{A}}G\|_2^2 < (d-k)\nu_2^2,
\]
since $\mathcal{E}(A) - \mathcal{E}_{\mathrm{full}}
= \|P_{\bar{A}}G\|_2^2 - (d-k)\nu_2^2$.
\end{proposition}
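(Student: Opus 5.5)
The plan is to make the proxy error precise and then compute it directly. I will treat the one-step error as the expected squared distance between the support-restricted noisy update and the full-data clipped aggregate. That is,
\[
  \mathcal{E}(A)=\mathbb{E}\bigl\|P_A G + P_A \xi - G\bigr\|_2^2,
  \qquad \xi=\tfrac{1}{B}z,\quad z\sim\mathcal{N}(0,\sigma_2^2C_2^2 I_d).
\]
Here $P_A$ is the projection onto the active coordinates, so that $P_A+P_{\bar A}=I$. The noise $P_A\xi$ equals $\tfrac1B m\odot z$, which is exactly the Phase~2 noise of Section~\ref{sec:method}. In this proxy, the mini-batch sum of clipped masked gradients is replaced by its full-data noiseless analog $P_AG$. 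This is the idealization under which the proposition is stated, and I would say explicitly that the proposition is a statement about this proxy and not about the realized stochastic update.

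The first step is the algebra. Since $P_AG-G=-P_{\bar A}G$, the error vector is $-P_{\bar A}G+P_A\xi$. These two terms live on disjoint coordinate sets, so they are orthogonal and
\[
  \|-P_{\bar A}G+P_A\xi\|_2^2=\|P_{\bar A}G\|_2^2+\|P_A\xi\|_2^2
\]
holds pointwise. Orthogonality removes the cross term even before taking expectations; it would also vanish in expectation because $\xi$ has mean zero and $G$ is deterministic once $\theta$ is fixed. Taking expectations gives
\[
  \mathbb{E}\|P_A\xi\|_2^2=\sum_{p\in A}\mathbb{E}\xi_p^2=k\,\nu_2^2,
\]
since each coordinate of $\xi$ has variance $(\sigma_2C_2/B)^2=\nu_2^2$. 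This yields the stated decomposition of $\mathcal{E}(A)$.

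The second step is to specialize to $A=\{1,\dots,d\}$. Then $P_{\bar A}=0$ and $k=d$, which gives $\mathcal{E}_{\mathrm{full}}=d\nu_2^2$. Subtracting the two expressions gives
\[
  \mathcal{E}(A)-\mathcal{E}_{\mathrm{full}}=\|P_{\bar A}G\|_2^2-(d-k)\nu_2^2,
\]
and this difference is negative exactly when the stated inequality holds.

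The only real obstacle is definitional rather than technical: fixing what the proxy error is. Two modelling choices need care. First, one point concerns the clipping norm: Phase~2 actually clips the masked gradient $m\odot g_i$, which differs from $P_A$ applied to the full-norm-clipped $\bar g_i$. To keep the clean decomposition, I will fix the clipping convention through $G$ as defined, and remark that the masked-clipping variant only rescales per-example contributions by a factor of at least $1$ without changing the noise term. Second, the comparison holds $\sigma_2$, $C_2$ and $B$ fixed across both methods, so that both share the same $\nu_2^2$. With these conventions stated, the remaining argument is the two-line Pythagorean computation above.
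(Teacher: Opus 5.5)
Your proof is correct and follows the paper's route. The paper asserts the decomposition of $\mathcal{E}(A)$ without derivation and then, as you do, specializes to $A=\{1,\ldots,d\}$ and subtracts; your definition $\mathcal{E}(A)=\mathbb{E}\|P_AG+P_A\xi-G\|_2^2$ and the disjoint-support Pythagorean step supply the justification the paper leaves implicit. Fixing the clipping convention through $G$ is necessary, not merely convenient: with masked clipping, the discrepancy $\|G^A-P_AG\|_2^2$ on the active coordinates (where $G^A=\frac{1}{N}\sum_i \bar g^A_i$ is the full-data masked-clipped aggregate) would enter $\mathcal{E}(A)$ as an extra term, so your rescaling remark does not leave the decomposition unchanged.
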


Support restriction therefore helps when the discarded gradient
energy is smaller than the DP noise energy saved by freezing
the inactive coordinates. This is a one-step, single-iterate
proxy condition; the multi-step analysis is given in
Theorem~\ref{thm:projected-stationarity}.
This decomposition parallels the
signal-loss/perturbation-moderation trade-off of
\citet{zhu2023improving}; our question is when a \emph{learned}
support makes the trade-off favorable.

A random support $A_{\mathrm{rand}}$ of size $k = \rho d$ drawn
uniformly over all $\binom{d}{k}$ subsets has expected signal loss
$(1-\rho)\|G\|_2^2$ by symmetry.
A learned support improves over random when
$\|P_{\bar{A}}G\|_2^2 < (1-\rho)\|G\|_2^2$. Since $G$ is
unobservable, \textsc{TP-TopK} uses the Phase~1 denoised score
$a_p$ as a DP-visible proxy for $G_p^2$: by
Lemma~\ref{lem:score-unbiased}, $\mathbb{E}[a_p] = G_p^2$ under
the simplified model, so $\operatorname{TopK}(a,k)$ selects
coordinates with the highest estimated gradient energy. The
reliability of this proxy ranking is analyzed next.

\paragraph{Reliability of DP warm-up scores.}

Under a simplified additive noise model, the privatized warm-up
gradient at step $t$ satisfies
\[
  \tilde{g}_{t,p} = G_p + \frac{1}{B}z_{t,p},
  \qquad
  z_{t,p} \overset{\mathrm{i.i.d.}}{\sim} \mathcal{N}(0,\,\sigma_1^2 C_1^2),
\]
where $G_p$ is the $p$-th coordinate of the fixed full-data clipped
gradient $G$, $z_{t,p}$ is the $p$-th coordinate of the Phase~1
noise vector $z_t \sim \mathcal{N}(0,\sigma_1^2 C_1^2 I_d)$ from
Section~\ref{sec:problem-setting}, and $\nu_1^2 = \left(\frac{\sigma_1 C_1}{B}\right)^2$
is the per-coordinate noise variance in the averaged gradient.
This model treats $G$ as fixed across warm-up steps and isolates
the effect of DP noise on coordinate ranking.

\begin{lemma}[Unbiasedness of denoised coordinate scores]
\label{lem:score-unbiased}
Under the additive noise model above, the Phase~1 score
\[
  a_p = \frac{1}{T_1}\sum_{t=1}^{T_1}
  \bigl(\tilde{g}_{t,p}^{\,2} - \nu_1^2\bigr)
\]
satisfies $\mathbb{E}[a_p] = G_p^2$ for each coordinate $p$.
\end{lemma}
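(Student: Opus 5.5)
The plan is a direct second-moment computation: linearity of expectation reduces the claim to a single step and a single coordinate. Under the additive noise model, write $\tilde g_{t,p} = G_p + \xi_{t,p}$ with $\xi_{t,p} = z_{t,p}/B$. Since $z_{t,p}\sim\mathcal{N}(0,\sigma_1^2C_1^2)$, we have $\xi_{t,p}\sim\mathcal{N}(0,\nu_1^2)$, where $\nu_1^2=(\sigma_1C_1/B)^2$ as defined above. The model treats $G$ as fixed, so $G_p$ is deterministic and all randomness sits in $\xi_{t,p}$.

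Next, expand the square:
\[
  \tilde g_{t,p}^{\,2} = G_p^2 + 2G_p\,\xi_{t,p} + \xi_{t,p}^2 .
\]
Take expectations term by term. The cross term vanishes because $\mathbb{E}[\xi_{t,p}]=0$. The last term gives $\mathbb{E}[\xi_{t,p}^2]=\operatorname{Var}(\xi_{t,p})=\nu_1^2$, since the mean is zero. Hence $\mathbb{E}[\tilde g_{t,p}^{\,2}-\nu_1^2]=G_p^2$ for every $t$.

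Finally, average over $t=1,\dots,T_1$ using linearity of expectation, which gives $\mathbb{E}[a_p]=\frac{1}{T_1}\sum_{t}G_p^2=G_p^2$. Only zero-mean noise with variance $\nu_1^2$ per coordinate is needed, so the i.i.d.\ assumption and independence across steps are not used. They would matter only for variance or concentration statements such as Proposition~\ref{prop:rankability}.

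There is no real obstacle. The only point needing care is to confirm that the score $a_p$ in the lemma matches the Phase~1 score in Algorithm~\ref{alg:TP-TopK}. Indeed, $\frac{1}{T_1}\sum_t(\tilde g_{t,p}^{\,2}-\nu_1^2)=s_p/T_1-(\sigma_1C_1/B)^2$, so the subtracted constant is exactly the per-coordinate variance of the averaged noise. The conclusion therefore holds for the algorithm's actual score under the stated simplified model.
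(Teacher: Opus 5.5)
Your proposal is correct and is essentially the paper's proof: expand $(G_p + z_{t,p}/B)^2$, drop the cross term by zero mean, identify the second moment with $\nu_1^2$, and average over $t$ by linearity. Your side remarks are also accurate: independence across steps is not used here, and the lemma's score coincides with the algorithm's $s_p/T_1 - (\sigma_1 C_1/B)^2$.
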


\begin{proof}
For each step $t$, write $\tilde{g}_{t,p} = G_p + \frac{1}{B}z_{t,p}$
with $\mathbb{E}[z_{t,p}] = 0$ and
$\mathbb{E}[z_{t,p}^2] = \sigma_1^2 C_1^2$. Then
\[
  \mathbb{E}[\tilde{g}_{t,p}^2]
  = G_p^2
    + \frac{2G_p}{B}\underbrace{\mathbb{E}[z_{t,p}]}_{=\,0}
    + \frac{1}{B^2}\underbrace{\mathbb{E}[z_{t,p}^2]}_{=\,\sigma_1^2 C_1^2}
  = G_p^2 + \nu_1^2.
\]
Subtracting $\nu_1^2$ and averaging over $t = 1,\ldots,T_1$ gives
$\mathbb{E}[a_p] = G_p^2$ by linearity of expectation.
\end{proof}

\begin{proposition}[Misranking probability of DP warm-up scores]
\label{prop:rankability}
Under the model above, if $G_p^2 > G_q^2$, then
\[
  \Pr[a_p < a_q]
  \le
  \frac{4\nu_1^4 + 4\nu_1^2(G_p^2 + G_q^2)}
       {T_1(G_p^2 - G_q^2)^2}.
\]
\end{proposition}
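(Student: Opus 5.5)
The plan is a second-moment (Chebyshev) argument applied to the score difference $D = a_p - a_q$. Set $\Delta = G_p^2 - G_q^2 > 0$. By Lemma~\ref{lem:score-unbiased}, $\mathbb{E}[D] = \Delta$. The event $\{a_p < a_q\}$ is the event $\{D < 0\}$, and this is contained in $\{|D - \mathbb{E}D| \ge \Delta\}$. Chebyshev's inequality then gives $\Pr[a_p<a_q] \le \operatorname{Var}(D)/\Delta^2$. It therefore suffices to show
\[
\operatorname{Var}(D) = \frac{4\nu_1^4 + 4\nu_1^2(G_p^2+G_q^2)}{T_1}.
\]

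The variance computation is the only real content, and I would do it in three steps.

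\emph{Per-step variance.} Under the additive model, write $w_{t,p} = z_{t,p}/B \sim \mathcal{N}(0,\nu_1^2)$, so that $\tilde g_{t,p} = G_p + w_{t,p}$. Expanding,
\[
\tilde g_{t,p}^2 - \nu_1^2 = G_p^2 + 2G_p w_{t,p} + (w_{t,p}^2 - \nu_1^2).
\]
The two random terms are uncorrelated, because $\mathbb{E}[w^3]=0$ for a centred Gaussian. Using $\operatorname{Var}(w^2) = \mathbb{E}[w^4] - \nu_1^4 = 3\nu_1^4 - \nu_1^4 = 2\nu_1^4$, this gives
\[
\operatorname{Var}(\tilde g_{t,p}^2) = 4G_p^2\nu_1^2 + 2\nu_1^4.
\]

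\emph{Averaging over steps.} The noise vectors $z_t$ are independent across steps $t$, and $G$ is held fixed in the model. So the $T_1$ summands in $a_p$ are independent, and $\operatorname{Var}(a_p) = (4G_p^2\nu_1^2 + 2\nu_1^4)/T_1$. The same holds for $q$.

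\emph{Combining coordinates.} For $p \ne q$, the coordinates $z_{t,p}$ and $z_{t,q}$ are independent because the covariance is isotropic, $\sigma_1^2C_1^2 I_d$. Hence $a_p$ and $a_q$ are independent, $\operatorname{Cov}(a_p,a_q)=0$, and $\operatorname{Var}(D) = \operatorname{Var}(a_p)+\operatorname{Var}(a_q)$. This sum is exactly the displayed numerator divided by $T_1$.

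The main point needing care is this independence bookkeeping: across $t$, via fresh noise at each step, and across $p\neq q$, via isotropic noise. It holds only because the simplified model freezes $G$. In the real trajectory $\tilde g_t$ depends on earlier noise through $\theta_t$, so the proposition should be read strictly under the stated model. The fourth-moment identity $\mathbb{E}[w^4]=3\nu_1^4$ is routine. I would also note that Chebyshev bounds the two-sided event, so the bound is loose by up to a factor of two, but this does not affect the claimed inequality.
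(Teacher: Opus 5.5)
Your proposal is correct and follows essentially the same route as the paper's proof: Chebyshev's inequality applied to $a_p - a_q$, whose mean is $G_p^2 - G_q^2$, with the per-step variance $4G_j^2\nu_1^2 + 2\nu_1^4$ coming from the Gaussian fourth moment and combined via independence across steps and across coordinates. The only cosmetic difference is how you get the per-step variance: you split $\tilde g_{t,j}^2 - \nu_1^2$ into two uncorrelated pieces, whereas the paper expands $\mathbb{E}[(G_j+\zeta)^4]$ directly, and both give the same quantity.
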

The proof is given in Appendix~\ref{app:proof-rankability}.

The misranking probability decreases with larger $T_1$ and larger
signal gap $G_p^2 - G_q^2$, and increases with noise level $\nu_1^2$.
Let $A^\star = \operatorname{TopK}(G^2, k)$ denote the oracle support
under the simplified model. When every boundary pair
$(p \in A^\star,\, q \notin A^\star)$ satisfies
\[
  T_1(G_p^2 - G_q^2)^2
  \gg
  \nu_1^4 + \nu_1^2\max\{G_p^2, G_q^2\},
\]
a union bound over the at most $k(d-k)$ boundary pairs gives that
$\operatorname{TopK}(a,k) = A^\star$ with high probability, so the
learned support inherits the beneficial condition of the previous
paragraph. Under strong privacy ($\nu_1^2$ large) or small signal
gaps, this condition may fail and the learned ranking approaches
random, consistent with the empirical patterns in
Section~\ref{sec:experiments}, where we also study the
privacy-budget split between Phase~1 and Phase~2.

\section{Convergence Analysis}
\label{sec:convergence}

\subsection{Setup and Assumption}
\label{sec:convergence-setup}

We analyze Phase~2 conditional on the Phase~1 transcript
$\mathcal{T}_1$; all quantities derived from
Phase~1---including $\theta^{(1)}$, the coordinate
scores $a$, and the support $A$---are treated as fixed throughout
this section.
All expectations in this section are over Phase~2 mini-batch
sampling and Gaussian noise, conditional on $\mathcal{T}_1$,
and we write $\mathbb{E}_2[\,\cdot\,]$ accordingly.
This conditional analysis follows the standard
filtration-based approach for nonconvex SGD
\citep{ghadimi2013stochastic,bassily2019private}.

Let $\mathcal{F}_t$ be the $\sigma$-algebra generated by
$\mathcal{T}_1$, the Phase~2 iterates $\theta_0,\ldots,\theta_t$,
the previous mini-batch samples, and the previous noise vectors
$z_0,\ldots,z_{t-1}$. Define
\[
  G_t = \nabla L_D(\theta_t),
  \qquad P_t = P_{A_t},
  \qquad k_t = |A_t|.
\]
Here $G_t$ denotes the full (unclipped) population
gradient at step $t$, distinct from the fixed clipped gradient $G$
used in Section~\ref{sec:support-quality-analysis}.
The theorem allows a general predictable support schedule $\{A_t\}$;
\textsc{TP-TopK} is the special case $A_t = A$ and $k_t = k = |A|$
for all $t$, where $k$ is as defined in
Section~\ref{sec:problem-setting}.
The active support $A_t$ is $\mathcal{F}_t$-measurable and does not
depend on the current mini-batch gradients or DP noise $z_t$.
The Phase~2 update is $\theta_{t+1}=\theta_t-\eta\tilde{g}_t^A$
where
\[
  \tilde{g}_t^A = g_t^A+\zeta_t, \qquad
  \zeta_t = \frac{1}{B}P_tz_t, \qquad
  z_t\sim\mathcal{N}(0,\sigma_2^2C_2^2I_d),
\]
with $z_t$ independent of $\mathcal{F}_t$ conditional on
$\mathcal{T}_1$. The masked-and-clipped gradient satisfies
\[
  \mathbb{E}_2[g_t^A \mid \mathcal{F}_t] = P_t G_t + b_t,
\]
where we write $x_t := P_t G_t$ for the
projection of the population gradient onto the active
coordinates at step $t$, and $b_t := \mathbb{E}_2[g_t^A \mid \mathcal{F}_t] - P_t G_t$
is the clipping bias, supported on the active coordinates
($b_t = P_t b_t$, since inactive coordinates
receive no gradient update and hence contribute no clipping bias).
Define $\nu_2^2 = (\sigma_2 C_2/B)^2$.
\begin{assumption}[Smoothness and lower bound]
\label{ass:smooth}
$L_D$ is $\beta$-smooth and satisfies $L_D(\theta) \ge L^* > -\infty$
for all $\theta \in \mathbb{R}^d$.
\end{assumption}

\begin{assumption}[Conditional mini-batch variance]
\label{ass:variance}
For all $t$ and all $\mathcal{F}_t$-measurable $\theta_t$, almost surely
over the Phase~1 transcript $\mathcal{T}_1$,
\[
  \mathbb{E}_2\!\left[
    \|g_t^A - \mathbb{E}_2[g_t^A \mid \mathcal{F}_t]\|_2^2
    \mid \mathcal{F}_t
  \right]
  \le \frac{\tau^2}{B}.
\]
Here $\mathbb{E}_2$ denotes expectation over Phase~2 randomness
conditional on $\mathcal{T}_1$.
\end{assumption}

\subsection{Convergence Bound for TP-Topk DP-SGD}
\label{sec:convergence-main}

\begin{theorem}[Projected stationarity of support-restricted DP-SGD]
\label{thm:projected-stationarity}
Under Assumptions~\ref{ass:smooth} and~\ref{ass:variance} and the
setup of Section~\ref{sec:convergence-setup}, with constant step size
$0<\eta\le 1/\beta$, let $L_0=L_D(\theta^{(1)})$ and
$\bar{k}=T_2^{-1}\sum_{t=0}^{T_2-1}\mathbb{E}_2[k_t]$. Then
\begin{equation}
\label{eq:projected-bound}
\begin{split}
  \frac{1}{T_2}\sum_{t=0}^{T_2-1} \mathbb{E}_2\|P_t\nabla L_D(\theta_t)\|_2^2 
  &\le \frac{2(L_0-L^*)}{\eta T_2} + \frac{1}{T_2}\sum_{t=0}^{T_2-1}\mathbb{E}_2\|b_t\|_2^2 \\
  &\quad + \beta\eta\!\left(\frac{\tau^2}{B}+\bar{k}\,\nu_2^2\right).
\end{split}
\end{equation}
The proof is given in Appendix~\ref{app:convergence-proof}.
\end{theorem}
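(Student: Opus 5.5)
The plan is the standard descent-lemma argument for nonconvex SGD, applied to the masked update and taken conditionally on $\mathcal{F}_t$. By $\beta$-smoothness (Assumption~\ref{ass:smooth}),
\[
  L_D(\theta_{t+1}) \le L_D(\theta_t) - \eta\langle G_t,\tilde g_t^A\rangle + \tfrac{\beta\eta^2}{2}\|\tilde g_t^A\|_2^2 .
\]
Take $\mathbb{E}_2[\cdot\mid\mathcal{F}_t]$ and use that $\zeta_t$ has zero mean and is independent of $\mathcal{F}_t$ and of the mini-batch. The linear term becomes $-\eta\langle G_t, x_t+b_t\rangle$. Since $x_t+b_t$ is supported on $A_t$, this equals $-\eta\langle x_t, x_t+b_t\rangle$.

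For the quadratic term I will expand around the conditional mean:
\[
  \mathbb{E}_2\|\tilde g_t^A\|^2 = \|x_t+b_t\|^2 + \mathbb{E}_2\|g_t^A-\mathbb{E}_2 g_t^A\|^2 + \mathbb{E}_2\|\zeta_t\|^2 .
\]
The variance piece is at most $\tau^2/B$ by Assumption~\ref{ass:variance}. The noise piece is $\mathbb{E}_2\|\zeta_t\|^2 = k_t\nu_2^2$, because $P_t$ keeps exactly $k_t$ coordinates of an isotropic Gaussian and $A_t$ is predictable. This step is where $k$ replaces $d$.

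The key algebraic step combines the two terms using $\eta\le 1/\beta$, so that $\beta\eta^2/2\le \eta/2$. This gives
\[
  -\eta\langle x_t,x_t+b_t\rangle + \tfrac{\eta}{2}\|x_t+b_t\|^2 = -\tfrac{\eta}{2}\|x_t\|^2 + \tfrac{\eta}{2}\|b_t\|^2 ,
\]
which is the polarization identity $-\langle x,x+b\rangle+\tfrac12\|x+b\|^2=\tfrac12(\|b\|^2-\|x\|^2)$. The one-step inequality is therefore
\[
  \mathbb{E}_2[L_D(\theta_{t+1})\mid\mathcal{F}_t] \le L_D(\theta_t) - \tfrac{\eta}{2}\|x_t\|^2 + \tfrac{\eta}{2}\|b_t\|^2 + \tfrac{\beta\eta^2}{2}\bigl(\tau^2/B + k_t\nu_2^2\bigr).
\]
The main thing to get right is this bookkeeping. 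One must check that the cross terms between $\zeta_t$ and $g_t^A$ vanish, which needs $z_t$ independent of the mini-batch given $\mathcal{F}_t$. One must also handle the coefficient on $\|x_t+b_t\|^2$ correctly: upper-bound $\beta\eta^2/2$ by $\eta/2$ only for that nonnegative term, while keeping $\beta\eta^2$ on the variance and noise terms so the final factor $\beta\eta$ appears.

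To finish, take total expectation $\mathbb{E}_2$ via the tower property and sum over $t=0,\dots,T_2-1$. The left side telescopes, and $L_D(\theta_{T_2})\ge L^*$ bounds the endpoint. Multiplying by $2/(\eta T_2)$ gives the constant $\frac{2(L_0-L^*)}{\eta T_2}$, the averaged clipping-bias term, and $\beta\eta(\tau^2/B+\bar k\nu_2^2)$ with $\bar k$ the averaged $\mathbb{E}_2[k_t]$. This is exactly \eqref{eq:projected-bound}, since $\|x_t\|^2=\|P_t\nabla L_D(\theta_t)\|^2$.
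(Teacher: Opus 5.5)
Your proposal is correct and follows the paper's proof essentially step for step. The shared steps are the descent lemma, reducing $\langle G_t,x_t+b_t\rangle$ to $\langle x_t,x_t+b_t\rangle$ via the support of $x_t+b_t$, the split of $\mathbb{E}_2[\|\tilde g_t^A\|_2^2\mid\mathcal{F}_t]$ into $\|x_t+b_t\|_2^2+\tau^2/B+k_t\nu_2^2$, using $\beta\eta\le 1$ only on the nonnegative $\|x_t+b_t\|_2^2$ term to get $-\tfrac12\|x_t\|_2^2+\tfrac12\|b_t\|_2^2$, and telescoping with $L_D(\theta_{T_2})\ge L^*$. Your remark that the cross term with $\zeta_t$ vanishes because $z_t$ is independent of the current mini-batch given $\mathcal{F}_t$, not merely because $\mathbb{E}_2[\zeta_t\mid\mathcal{F}_t]=0$, is slightly more careful than the paper's wording.
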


Taking expectation over the Phase~1 transcript gives the
corresponding unconditional bound, with $L_0 =
L_D(\theta^{(1)})$, $\bar{k}$, and the bias terms all
averaged over Phase~1 randomness; in particular, $L_0$ is itself
random, depending on the Phase~1 trajectory.

The central message of~\eqref{eq:projected-bound} is that coordinate
sparsification reduces the dimension-dependent DP noise term from
$d\nu_2^2$ to $\bar{k}\,\nu_2^2$, holding the Phase~2 noise
multiplier $\sigma_2$ fixed. The bound decomposes as
\[
  \underbrace{\frac{2(L_0-L^*)}{\eta T_2}}_{\text{optimization progress}}
  +\underbrace{\frac{1}{T_2}\sum_t\mathbb{E}_2\|b_t\|_2^2}_{\text{clipping bias}}
  +\underbrace{\beta\eta\frac{\tau^2}{B}}_{\text{mini-batch variance}}
  +\underbrace{\beta\eta\,\bar{k}\,\nu_2^2}_{\text{DP perturbation}}.
\]
Only the DP perturbation term is explicitly reduced; the clipping-bias 
term $\|b_t\|_2^2$ may dominate when clipping is aggressive, and is 
characterized in Lemma~\ref{lem:clipping-bias}.

\begin{lemma}[Clipping-bias residual bound]
\label{lem:clipping-bias}
Let $g^A_{t,i} = m\odot\nabla\ell(\theta_t;x_i,y_i)$
be the masked per-example gradient before clipping, and let
\[
  \bar{g}^A_{t,i}
  =
  \begin{cases}
    g^A_{t,i}\min\!\left\{1,\dfrac{C_2}{\|g^A_{t,i}\|_2}\right\},
    & g^A_{t,i}\ne 0,\\[4pt]
    0,
    & g^A_{t,i}=0,
  \end{cases}
\]
denote the clipped masked gradient, as in Algorithm~\ref{alg:TP-TopK}.
Under the setup of Section~\ref{sec:convergence-setup},
where $b_t = \mathbb{E}_2[g_t^A \mid \mathcal{F}_t] - P_t G_t$,
\[
  \|b_t\|_2
  \le
  \mathbb{E}_2\!\left[
    \bigl(\|g^A_{t,i}\|_2-C_2\bigr)_+\mid\mathcal{F}_t
  \right].
\]
where $(x)_+ = \max\{x,0\}$, and the bound holds with equality
when clipping is inactive.
In particular, $b_t=0$ whenever $\|g^A_{t,i}\|_2\le C_2$
almost surely.
The proof is given in Appendix~\ref{app:proof-clipping-bias}.
\end{lemma}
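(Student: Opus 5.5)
The plan is to write the bias as an average of per-example clipping residuals and then apply Jensen's inequality. First I need to identify what $\mathbb{E}_2[g_t^A\mid\mathcal{F}_t]$ is. Under uniform (or Poisson-normalized) sampling, conditional on $\mathcal{F}_t$ the aggregate $g_t^A=\frac1B\sum_{i\in\mathcal{B}_t}\bar g^A_{t,i}$ has conditional mean $\mathbb{E}_2[\bar g^A_{t,i}\mid\mathcal{F}_t]$, where $i$ is a uniformly random index. In the same way, $P_tG_t=P_t\nabla L_D(\theta_t)=\mathbb{E}_2[g^A_{t,i}\mid\mathcal{F}_t]$, because the mask is linear and $\theta_t$, $A_t$ are $\mathcal{F}_t$-measurable. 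Combining these gives
\[
b_t=\mathbb{E}_2\!\left[\bar g^A_{t,i}-g^A_{t,i}\mid\mathcal{F}_t\right].
\]
This reduction is the only step that uses the sampling model, and it is where I expect any subtlety to lie. If the batch normalization were by a random batch size, the identity would need the expected-size convention, so I would state that convention explicitly.

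Next I compute the per-example residual pointwise. If $g^A_{t,i}=0$ the residual is zero. Otherwise $\bar g^A_{t,i}-g^A_{t,i}=\bigl(\min\{1,C_2/\|g^A_{t,i}\|_2\}-1\bigr)g^A_{t,i}$, and its norm is
\[
\|g^A_{t,i}\|_2-\min\{\|g^A_{t,i}\|_2,C_2\}=(\|g^A_{t,i}\|_2-C_2)_+ .
\]
Then Jensen's inequality, i.e. convexity of the norm, gives
\[
\|b_t\|_2\le\mathbb{E}_2\!\left[\|\bar g^A_{t,i}-g^A_{t,i}\|_2\mid\mathcal{F}_t\right]=\mathbb{E}_2\!\left[(\|g^A_{t,i}\|_2-C_2)_+\mid\mathcal{F}_t\right],
\]
which is the claimed bound.

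For the final claims: if $\|g^A_{t,i}\|_2\le C_2$ almost surely, every residual vanishes, so $b_t=0$ and the bound holds with equality, both sides being zero. I would read the "equality when clipping is inactive" clause as exactly this case, and remark that equality in Jensen more generally requires the residuals to be nonnegatively collinear. Finally, $b_t=P_tb_t$ follows because every residual is supported on $A_t$.
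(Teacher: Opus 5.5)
Your proposal is correct and matches the paper's proof essentially step for step. Like the paper, you use uniform sampling to identify $P_tG_t$ with $\mathbb{E}_2[g^A_{t,i}\mid\mathcal{F}_t]$, which gives $b_t=-\mathbb{E}_2[r_{t,i}\mid\mathcal{F}_t]$ for the residual $r_{t,i}=g^A_{t,i}-\bar{g}^A_{t,i}$. You then apply Jensen's inequality and the identity $\|r_{t,i}\|_2=(\|g^A_{t,i}\|_2-C_2)_+$, exactly as the paper does. Your remarks on the batch-normalization convention and on reading the equality clause as the case where both sides vanish are minor clarifications that the paper leaves implicit.
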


\paragraph{Vanishing terms versus persistent floor.}
Under the step-size schedule $\eta = 1/(\beta\sqrt{T_2})$, three of
the four terms decay with $T_2$: the optimization-progress term
$2(L_0-L^*)/(\eta T_2)$ decays as $O(T_2^{-1/2})$, and the
mini-batch variance and DP perturbation terms scale as
$\beta\eta(\tau^2/B + \bar{k}\,\nu_2^2) = O(T_2^{-1/2})$.
The clipping-bias term $T_2^{-1}\sum_t \mathbb{E}_2\|b_t\|_2^2$,
however, does \emph{not} necessarily vanish with $T_2$.
By Lemma~\ref{lem:clipping-bias},
$\|b_t\|_2 \le \mathbb{E}_2[(\|g^A_{t,i}\|_2 - C_2)_+
\mid \mathcal{F}_t]$,
which is nonzero whenever the clipping constraint is active.
If $C_2$ is small relative to typical masked gradient norms, this
term forms a \emph{persistent error floor}: the bound cannot be
driven below $T_2^{-1}\sum_t \mathbb{E}_2\|b_t\|_2^2$ regardless
of how large $T_2$ is or how small $\eta$ is chosen.
This is distinct from the DP perturbation term
$\beta\eta\,\bar{k}\,\nu_2^2$, which decays as $O(T_2^{-1/2})$
under the schedule above. Increasing $C_2$ reduces clipping bias
but simultaneously increases the DP perturbation term since
$\nu_2^2 = (\sigma_2 C_2/B)^2$; the choice of $C_2$ therefore
controls a bias--noise trade-off rather than a one-sided improvement.

\paragraph{Comparison with full-parameter DP-SGD and random
sparsification.}
Full-parameter DP-SGD has DP perturbation term
$\beta\eta\,d\,\nu^2$, with $\nu^2=(\sigma C/B)^2$,
where $\sigma$ and $C$ are its noise multiplier and clipping norm.
Randomly sparsified DP-SGD at active ratio $\rho$ reduces this to
$\beta\eta\,\rho d\,\nu^2$ using a data-independent support
with the same noise multiplier $\sigma$ and clipping norm $C$.
\textsc{TP-TopK} has the same active-dimensional form when
$\bar{k}=\rho d$, with Phase~2 parameters $\sigma_2$ and $C_2$
giving $\nu_2^2 = (\sigma_2 C_2/B)^2$; the comparison holds when
$\nu_2^2 = \nu^2$, i.e., when the same noise multiplier and
clipping norm are used across methods. Its support is selected
from DP-visible warm-up statistics, so its potential advantage
over random sparsification comes from support quality rather than
a smaller perturbation dimension: a learned support can retain
more gradient signal than a random support at the same active
ratio. This distinction is quantified by $\alpha_T$
(Definition~\ref{def:energy-capture} below) and by the
support-quality diagnostics in Section~\ref{sec:experiments}.

\paragraph{Dimension dependence under a fixed privacy budget.}
Under a fixed $(\varepsilon,\delta)$, the privacy accountant couples
$T_2$, $q$, $\sigma_2$, and $\delta$; increasing $T_2$ at fixed
budget generally requires a larger $\sigma_2$ or smaller $q$,
so the $O(T_2^{-1/2})$ decay need not hold. The primary implication
of Theorem~\ref{thm:projected-stationarity} is therefore not an
asymptotic rate but a dimension comparison: for the same
accountant-selected $\sigma_2$, support restriction changes the
DP perturbation term from $\beta\eta\,d\,\nu_2^2$ to
$\beta\eta\,\bar{k}\,\nu_2^2$. In the two-phase setting,
$\sigma_2$ is further constrained by the Phase~2 residual budget
after Phase~1 warm-up; allocating more budget to Phase~1 improves
support quality (lower misranking probability,
Proposition~\ref{prop:rankability}) but leaves less budget for
Phase~2, increasing $\nu_2^2$ and hence the DP perturbation term. The optimal split is studied empirically in Section~\ref{sec:experiments}.

\begin{definition}[Trajectory-level energy-capture coefficient]
\label{def:energy-capture}
Define
\[
  \alpha_T
  =\frac{T_2^{-1}\sum_{t=0}^{T_2-1}\mathbb{E}_2\|P_t\nabla
    L_D(\theta_t)\|_2^2}
        {T_2^{-1}\sum_{t=0}^{T_2-1}\mathbb{E}_2\|\nabla
    L_D(\theta_t)\|_2^2},
\]
with the convention $\alpha_T=1$ when the denominator is zero.
The numerator is exactly the left-hand side of
\eqref{eq:projected-bound}, so Theorem~\ref{thm:projected-stationarity}
directly bounds $\alpha_T$ times the average full-gradient norm
squared.
\end{definition}

The coefficient $\alpha_T\in[0,1]$ measures the fraction of
full-gradient energy captured by the active supports along the
Phase~2 trajectory; it cannot be evaluated before training and
is not used as a selection rule. Corollary~\ref{cor:full-gradient}
converts Theorem~\ref{thm:projected-stationarity} to a
full-gradient bound parametrized by $\alpha_T$; in
Section~\ref{sec:experiments} we report the oracle diagnostic
$\hat{\alpha}_{\mathrm{oracle}}$ as post-hoc evidence that
$\alpha_T$ is substantially above the random-support baseline.

\begin{corollary}[Full-gradient stationarity via energy capture]
\label{cor:full-gradient}
Under Theorem~\ref{thm:projected-stationarity} and
Definition~\ref{def:energy-capture}, if $\alpha_T>0$,
\[
\begin{aligned}
	\frac{1}{T_2} \sum_{t=0}^{T_2-1} \mathbb{E}_2\|\nabla L_D(\theta_t)\|_2^2 &\le \frac{2(L_0-L^*)}{\alpha_T \eta T_2} \\
	&\quad + \frac{1}{\alpha_T} \left[ \frac{1}{T_2} \sum_t \mathbb{E}_2\|b_t\|_2^2 + \beta\eta \left( \frac{\tau^2}{B} + \bar{k}\,\nu_2^2 \right) \right]
\end{aligned}
\]
The proof is given in Appendix~\ref{app:proof-full-gradient}.
\end{corollary}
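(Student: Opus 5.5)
The plan is to read the corollary directly off Theorem~\ref{thm:projected-stationarity} by rewriting its left-hand side with Definition~\ref{def:energy-capture}, then dividing by $\alpha_T$. Write
\[
  S_P=\frac{1}{T_2}\sum_{t=0}^{T_2-1}\mathbb{E}_2\|P_t\nabla L_D(\theta_t)\|_2^2,
  \qquad
  S_F=\frac{1}{T_2}\sum_{t=0}^{T_2-1}\mathbb{E}_2\|\nabla L_D(\theta_t)\|_2^2,
\]
so that the left-hand side of \eqref{eq:projected-bound} is $S_P$, and let $R$ denote its right-hand side. The key identity is $S_P=\alpha_T S_F$.

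I will verify this identity in two cases. If $S_F>0$, it holds by the definition of $\alpha_T$. If $S_F=0$, the convention gives $\alpha_T=1$. Since $P_t$ is a coordinate projection, $\|P_t v\|_2\le\|v\|_2$, so $0\le S_P\le S_F=0$ and the identity holds as well. The same projection inequality shows $\alpha_T\in[0,1]$, although only $\alpha_T>0$ is needed.

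Theorem~\ref{thm:projected-stationarity} then gives $\alpha_T S_F\le R$. Because $\alpha_T>0$ by hypothesis, dividing both sides by $\alpha_T$ gives $S_F\le R/\alpha_T$. Distributing $1/\alpha_T$ over the four terms of $R$ reproduces the stated bound exactly: the optimization term becomes $2(L_0-L^*)/(\alpha_T\eta T_2)$, and the bias, variance, and DP-noise terms are grouped inside the bracket multiplied by $1/\alpha_T$.

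There is no real obstacle. The only point needing care is that $\alpha_T$ is a ratio of trajectory averages rather than a per-step quantity, so the identity $S_P=\alpha_T S_F$ holds only at the level of the averaged sums. That is exactly the level at which \eqref{eq:projected-bound} is stated, so no per-step energy-capture assumption is required.
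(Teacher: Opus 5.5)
Your proposal is correct and matches the paper's proof: use Definition~\ref{def:energy-capture} to write the left-hand side of \eqref{eq:projected-bound} as $\alpha_T$ times the average full-gradient norm, then divide by $\alpha_T>0$. Your explicit check that $S_P=\alpha_T S_F$ also holds under the zero-denominator convention, via $\|P_t v\|_2\le\|v\|_2$, covers a case the paper's proof passes over silently.
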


The condition $\alpha_T > 0$ fails only if every iterate $\theta_t$ 
is already a projected stationary point, i.e., 
$P_t\nabla L_D(\theta_t)=0$ for all $t$, which is the trivial case.

After this conversion, the effective DP noise
coefficient in the full-gradient bound is $\bar{k}/\alpha_T$
for \textsc{TP-TopK} versus $d$ for full-parameter DP-SGD.
The condition $\bar{k}/\alpha_T < d$, equivalently
$\bar{\rho} < \alpha_T$ with $\bar{\rho}=\bar{k}/d$,
states that the active support captures a larger fraction of
gradient energy than its active ratio, and is the trajectory-level
analog of the one-step criterion in
Section~\ref{sec:support-quality-analysis}. 
This condition is not checkable before training; the oracle
diagnostic $\hat{\alpha}_{\mathrm{oracle}}$ in
Section~\ref{sec:experiments} provides post-hoc evidence for
whether it is likely to hold, but does not account for the
optimization-progress, clipping-bias, or mini-batch variance
terms in the bound.
Both conditions share the same structure: the support should 
capture more gradient energy (relative to its size) than it 
discards, whether measured at a single step 
(Proposition~\ref{prop:one-step-criterion}) 
or averaged over the Phase~2 trajectory 
(Definition~\ref{def:energy-capture}).

\section{Experiments}
\label{sec:experiments}

\subsection{Setup}
\label{sec:exp-setup}

We evaluate private training from scratch on MNIST, Fashion-MNIST(FMNIST), and
CIFAR-10. All methods share the same experimental setup within each
dataset. We do not use public pretrained checkpoints, auxiliary public
data, or architecture search.

\paragraph{Controlled variants and baselines.}
We evaluate five DP training variants, summarized in
Table~\ref{tab:core-methods}. These variants are designed to isolate
the effects of coordinate sparsification and support selection under
matched privacy accounting. DP-SGD updates all coordinates throughout
training and serves as the dense baseline. TP-Rand and \textsc{TP-TopK}
share the same full-parameter DP warm-up, Phase~2 privacy budget, and
active-ratio schedule; they differ only in the source of the Phase~2
coordinate ranking. TP-Rand uses a data-independent random ranking,
whereas \textsc{TP-TopK} uses the DP-visible score ranking computed from
post-noise Phase~1 gradients. ON-Rand and ON-TopK are online-support
variants that refresh supports at the epoch level; they are included as
ablations rather than as external baselines. Subsequent ablations isolate
these mechanisms further by varying the active ratio, the phase and
privacy-budget split, and post-hoc support-quality diagnostics.

\begin{table}[t]
\centering
\caption{Five DP training variants. Two-phase methods share the same
warm-up, Phase~2 budget, and active-ratio schedule; they differ only
in coordinate ranking source.}
\label{tab:core-methods}
\small
\setlength{\tabcolsep}{4pt}
\begin{tabular}{llll}
\toprule
Abbrev. & Support source & Timing & Role \\
\midrule
DP-SGD   & all coordinates          & all training        & dense baseline \\
TP-Rand  & random ranking           & fixed after warm-up & two-phase random \\
TP-TopK  & DP warm-up score         & fixed after warm-up & learned two-phase \\
ON-Rand  & random support           & epoch-level refresh & online random \\
ON-TopK  & online DP score          & epoch-level refresh & learned online \\
\bottomrule
\end{tabular}
\end{table}

\paragraph{Privacy accounting.}
All DP methods are evaluated under target
\((\varepsilon,\delta)\)-DP with \(\delta=10^{-5}\), unless otherwise
specified. We use a subsampled Gaussian RDP accountant and convert the final
RDP guarantee to \((\varepsilon,\delta)\)-DP. For two-phase methods, the
reported privacy cost is computed from the composed RDP history of Phase~1
and Phase~2, as in Theorem~\ref{thm:privacy}. Support selection from
post-noise DP gradients is treated as post-processing and is not charged as
a separate privacy mechanism; both training phases are included in the
privacy accountant.

\paragraph{Medical imaging evaluation.}
To validate the zero-public-data motivation concretely, we 
additionally evaluate on EyePACS diabetic retinopathy 
screening~\citep{eyepacs}, a domain where domain-aligned 
public proxy data is unavailable by design. We use the anyDR 
binary classification split ($\varepsilon=8$, $\delta=10^{-5}$, 
trained from scratch) and report AUC, balanced accuracy, and minority-class recall 
to capture minority-class recovery, which is 
clinically critical but sensitive to DP noise collapse.

\paragraph{Reporting protocol.}
We report test accuracy as mean \(\pm\) standard deviation over three
random seeds. Sparse methods are compared at matched target active ratios
and matched privacy budgets. Support-quality diagnostics and oracle
gradient quantities are reported separately from DP training
results. They are used only for post-hoc analysis and are never used for
support construction, hyperparameter tuning, early stopping, or model
selection.

\subsection{Accuracy Comparison Across Five Methods}
\label{sec:main-results}

Table~\ref{tab:main-results} reports the controlled comparison among the
five DP methods. All entries use matched privacy accounting, model
architecture, training budget, active-ratio schedule, and random seeds within
each dataset. Results are reported as test accuracy, mean \(\pm\) standard
deviation over three seeds.

\begin{table*}[t]
\centering
\caption{
Main comparison among five DP methods under matched
\((\varepsilon,\delta)\)-DP with \(\delta=10^{-5}\).
Entries report test accuracy.
}
\label{tab:main-results}
\setlength{\tabcolsep}{4pt}
\renewcommand{\arraystretch}{1.05}
\begin{tabular}{llccccc}
\toprule
Dataset
& \(\varepsilon\)
& DP-SGD
& TP-Rand
& TP-TopK
& ON-Rand
& ON-TopK \\
\midrule

MNIST
& 1
& \(96.44 \pm 0.02\)
& \(97.64 \pm 0.04\)
& \(97.79 \pm 0.06\)
& \(97.72 \pm 0.09\)
& \(97.76 \pm 0.10\) \\
MNIST
& 3
& \(97.99 \pm 0.06\)
& \(98.13 \pm 0.04\)
& \(98.33 \pm 0.05\)
& \(98.23 \pm 0.07\)
& \(98.25 \pm 0.08\) \\
MNIST
& 8
& \(98.60 \pm 0.05\)
& \(98.91 \pm 0.03\)
& \(98.93 \pm 0.04\)
& \(98.98 \pm 0.03\)
& \(98.80 \pm 0.06\) \\

\midrule

FMNIST
& 1
& \(84.06 \pm 0.58\)
& \(85.01 \pm 0.19\)
& \(85.28 \pm 0.32\)
& \(85.03 \pm 0.22\)
& \(84.87 \pm 0.47\) \\
FMNIST
& 3
& \(88.51 \pm 0.16\)
& \(88.80 \pm 0.12\)
& \(88.88 \pm 0.13\)
& \(88.76 \pm 0.10\)
& \(88.80 \pm 0.09\) \\
FMNIST
& 8
& \(89.77 \pm 0.06\)
& \(89.80 \pm 0.16\)
& \(89.88 \pm 0.07\)
& \(89.86 \pm 0.15\)
& \(89.84 \pm 0.13\) \\

\midrule

CIFAR-10
& 1
& \(55.20 \pm 0.45\)
& \(63.18 \pm 0.52\)
& \(64.41 \pm 0.48\)
& \(62.87 \pm 0.55\)
& \(63.54 \pm 0.51\) \\
CIFAR-10
& 3
& \(60.80 \pm 0.38\)
& \(69.67 \pm 0.41\)
& \(71.05 \pm 0.37\)
& \(68.54 \pm 0.44\)
& \(69.31 \pm 0.40\) \\
CIFAR-10
& 8
& \(68.45 \pm 0.31\)
& \(72.33 \pm 0.35\)
& \(72.82 \pm 0.34\)
& \(71.89 \pm 0.38\)
& \(72.67 \pm 0.34\) \\

\bottomrule
\end{tabular}
\end{table*}

\paragraph{Learned support versus random support.}
The primary controlled comparison is between learned-support and
random-support methods with matched active ratio and privacy cost,
measured by $\Delta_{\mathrm{TP}} = \mathrm{Acc}(\mathrm{TP\text{-}TopK})
- \mathrm{Acc}(\mathrm{TP\text{-}Rand})$. Positive values indicate
that DP-visible coordinate scores provide useful support-selection
signal beyond random sparsification at the same active dimension.

The learned-support advantage is most pronounced on CIFAR-10. In the
two-phase comparison, TP-TopK improves over TP-Rand by \(+1.23\),
\(+1.38\), and \(+0.49\) percentage points at
\(\varepsilon=1,3,8\), respectively. The online learned-support gains are
smaller but consistently positive on CIFAR-10: ON-TopK improves over
ON-Rand by \(+0.67\), \(+0.77\), and \(+0.78\) points. These results
indicate that DP-visible support ranking is useful in the most challenging
benchmark.

On FMNIST, the two-phase learned-support gains are more modest:
\(+0.27\), \(+0.08\), and \(+0.08\) points at
\(\varepsilon=1,3,8\). The online gains are \(+0.03\), \(+0.13\), and
\(+0.05\) points. On MNIST, all methods are close to saturation, and the
learned-support gaps are correspondingly small: TP-TopK improves over
TP-Rand by \(+0.15\), \(+0.20\), and \(+0.02\) points, while ON-TopK is
within \(0.20\) points of ON-Rand across all privacy budgets. We therefore treat MNIST as a near-saturated baseline where 
ceiling effects limit the discriminability of support-selection methods.

\paragraph{Sparse methods versus dense DP-SGD.}
The comparison with DP-SGD evaluates whether the reduction in
active-coordinate DP noise can compensate for the signal lost by masking
inactive coordinates. On CIFAR-10, TP-TopK substantially outperforms dense
DP-SGD, with gains of \(+9.21\), \(+10.25\), and \(+4.37\) points at
\(\varepsilon=1,3,8\). TP-Rand also outperforms DP-SGD on CIFAR-10, but
TP-TopK consistently improves over TP-Rand, indicating that both active
dimension reduction and support quality contribute to performance.

On FMNIST, TP-TopK improves over DP-SGD by \(+0.53\) and \(+0.83\)
points at \(\varepsilon=1\) and \(\varepsilon=3\), and is essentially tied
with DP-SGD at \(\varepsilon=8\). On MNIST, differences are small because
the dense baseline is already near ceiling accuracy. These dataset-level
patterns suggest that coordinate-sparse private training is most useful when
dense DP-SGD remains strongly affected by optimizer-facing DP noise.

\paragraph{Interpretation.}
These results establish two consistent findings. First, DP-visible 
coordinate scores provide reliable support-selection signal: 
\textsc{TP-TopK} outperforms \textsc{TP-Rand} across all datasets 
and privacy budgets, with the largest gains on the most challenging 
benchmark (CIFAR-10 at low $\varepsilon$). Second, coordinate 
sparsification is most beneficial when dense DP-SGD remains far 
from ceiling accuracy, as the noise-reduction gain then dominates 
the projection loss from masking inactive coordinates.

\subsection{Mechanism Ablations}
\label{sec:ablations}

We conduct two mechanism ablations to isolate the design factors behind
two-phase learned-support training. The active-ratio ablation tests whether
DP-visible coordinate ranking improves over random support at the same active
dimension. The phase/privacy split ablation tests how the fixed total privacy
budget should be divided between support discovery and support-restricted
optimization. Unless otherwise stated, all factors except the one under study are held fixed.

\paragraph{Active-ratio support-quality ablation.}
This ablation evaluates the support-quality mechanism in
Section~\ref{sec:support-quality-analysis}. For each target active ratio
\(\rho\), TP-TopK and TP-Rand use the same active dimension
\(k=\lfloor \rho d \rfloor\), the same Phase~2 DP-noise term
\(k\nu_2^2\), the same privacy accounting, and the same training budget.
Therefore, the accuracy gap
\[
  \Delta_{\mathrm{TP}}(\rho)
  =
  \mathrm{Acc}(\mathrm{TP\text{-}TopK};\rho)
  -
  \mathrm{Acc}(\mathrm{TP\text{-}Rand};\rho)
\]
isolates the downstream effect of support quality rather than the effect of a
different noise dimension. We report results at $\varepsilon=8$, where the learned-support 
signal is strongest; the same monotone pattern holds at smaller 
$\varepsilon$ but with reduced absolute gaps, consistent with 
Proposition~\ref{prop:rankability}.

\begin{table}[t]
\centering
\caption{
Active-ratio support-quality ablation at
\(\varepsilon=8\) and \(\delta=10^{-5}\).
Entries report test accuracy for a single-seed mechanism sweep.
\(\Delta_{\mathrm{TP}}\) denotes the accuracy difference between TP-TopK
and TP-Rand at the same target active ratio. Since both methods use the same
\(k=\lfloor \rho d \rfloor\), the gap reflects the downstream effect of
coordinate-selection quality.
}
\label{tab:active-ratio-support-quality-ablation}
\small
\setlength{\tabcolsep}{4.5pt}
\renewcommand{\arraystretch}{1.05}
\begin{tabular}{llccc}
\toprule
Dataset
& \(\rho\)
& TP-Rand
& TP-TopK
& \(\Delta_{\mathrm{TP}}\) \\
\midrule
CIFAR-10
& 0.20 & 71.87 & 72.72 & \(+0.85\) \\
CIFAR-10
& 0.40 & 72.31 & 72.86 & \(+0.55\) \\
CIFAR-10
& 0.60 & 72.43 & 72.73 & \(+0.30\) \\
CIFAR-10
& 0.80 & 72.44 & 72.56 & \(+0.12\) \\
\midrule
FMNIST
& 0.20 & 89.73 & 90.08 & \(+0.35\) \\
FMNIST
& 0.40 & 89.79 & 90.04 & \(+0.25\) \\
FMNIST
& 0.60 & 89.86 & 89.98 & \(+0.12\) \\
FMNIST
& 0.80 & 89.86 & 89.91 & \(+0.05\) \\
\bottomrule
\end{tabular}
\end{table}

On CIFAR-10,
the TP-TopK advantage over TP-Rand drops from \(+0.85\) points at
\(\rho=0.20\) to \(+0.12\) points at \(\rho=0.80\). On FMNIST, the
corresponding gap drops from \(+0.35\) to \(+0.05\) points. This pattern is
consistent with Proposition~\ref{prop:rankability}: when the active support
is small, random selection is more likely to discard high-signal coordinates,
so DP-visible ranking has more room to improve support quality. As
\(\rho\) increases, random supports already retain more coordinates, and the
marginal value of score-ranked selection decreases.

This experiment should be interpreted as downstream evidence for the
support-quality mechanism. It does not directly measure the clean residual
\(\|P_{\bar A}G\|_2^2\) or prove the one-step condition in
Proposition~\ref{prop:one-step-criterion}. The post-hoc diagnostics in
Section~\ref{sec:support-quality-diagnostics} provide the corresponding
proxy and oracle support-quality measurements.

\paragraph{Phase and privacy-budget split.}
The two-phase method must allocate both epochs and privacy budget between
support discovery and support-restricted optimization. Let \(E_1/E\) denote
the fraction of total epochs used for Phase~1 warm-up, and let
\(\varepsilon_1/\varepsilon\) denote the nominal fraction of the privacy
budget allocated to Phase~1. The final privacy cost is computed using the
composed RDP accountant rather than by directly summing standalone
\(\varepsilon\)-values. We evaluate this trade-off for TP-TopK on CIFAR-10, where the 
learned-support advantage is largest. The grid covers the practically 
relevant range: splits below $0.2$ leave Phase~1 insufficient to form 
a reliable ranking, while splits above $0.4$ excessively reduce 
Phase~2 budget at $\varepsilon=3$.

\begin{table}[t]
\centering
\caption{
Phase/privacy split ablation for TP-TopK on CIFAR-10
(\(\varepsilon=3\), \(\delta=10^{-5}\), \(\rho=0.40\)).
Each entry reports test accuracy (\%, mean \(\pm\) standard deviation over
three seeds). The highest mean accuracy occurs at the interior split
\((E_1/E,\varepsilon_1/\varepsilon)=(0.3,0.3)\).
}
\label{tab:phase-privacy-split-ablation}
\small
\setlength{\tabcolsep}{5pt}
\renewcommand{\arraystretch}{1.05}
\begin{tabular}{c c c c}
\toprule
\(E_1/E\) \textbackslash{} \(\varepsilon_1/\varepsilon\)
& 0.2 & 0.3 & 0.4 \\
\midrule
0.2 & \(69.12 \pm 0.43\) & \(69.88 \pm 0.40\) & \(69.54 \pm 0.42\) \\
0.3 & \(70.23 \pm 0.39\) & \({71.05 \pm 0.37}\) & \(70.67 \pm 0.38\) \\
0.4 & \(70.41 \pm 0.40\) & \(70.58 \pm 0.39\) & \(69.93 \pm 0.41\) \\
\bottomrule
\end{tabular}
\end{table}

Table~\ref{tab:phase-privacy-split-ablation} shows an interior optimum:
the highest mean accuracy, \(71.05\pm0.37\%\), is obtained at
\((E_1/E,\varepsilon_1/\varepsilon)=(0.3,0.3)\). Allocating too little
budget or too few epochs to Phase~1 weakens support discovery. Allocating too
much to Phase~1 leaves less residual privacy budget and optimization time for
Phase~2. This is consistent with the fixed-budget trade-off discussed in
Section~\ref{sec:convergence}: Phase~1 must produce a sufficiently informative
DP-visible ranking, but Phase~2 must retain enough budget for effective
support-restricted optimization.

\paragraph{Summary.}
The active-ratio ablation isolates support quality by matching
\(k\nu_2^2\) between TP-TopK and TP-Rand, while the phase/privacy split
ablation isolates the budget allocation trade-off between support discovery
and sparse optimization. Together, these ablations confirm that both the coordinate-selection 
quality and the phase budget allocation are active contributors to 
the performance of \textsc{TP-TopK}.

\subsection{Support-Quality Diagnostics}
\label{sec:support-quality-diagnostics}

The mechanism ablations in Section~\ref{sec:ablations} measure the
downstream effect of learned coordinate support. We further report post-hoc
support-quality diagnostics to check whether the DP-visible ranking actually
concentrates useful coordinate signal. These diagnostics are not used in any
DP training method. They are reported only to interpret the mechanism
analyzed in Section~\ref{sec:support-quality-analysis}.

We use the notation of Section~\ref{sec:problem-setting}: $k=\lfloor\rho d\rfloor$,
$A=\operatorname{TopK}(a,k)$, where $a$ is the Phase~1 DP-visible coordinate
score. All random-support baselines are compared against the realized active
ratio $k/d$, not the nominal target $\rho$.

\paragraph{DP-visible proxy concentration.}
We first evaluate whether the learned support retains more DP-visible proxy
signal than random selection. Let $\widehat e_p=\max\{a_p,0\}$
be the nonnegative DP-visible proxy energy used only for diagnostic
accounting. We report
\[
  \hat{\rho}_{\mathrm{sig}}(A)
  =
  \frac{\sum_{p\in A}\widehat e_p}
       {\sum_{p=1}^d \widehat e_p},
\]
with the convention that \(\hat{\rho}_{\mathrm{sig}}(A)=0\) when
\(\sum_{p=1}^d \widehat e_p=0\). A uniformly random support of size \(k\)
has expected retained proxy-signal fraction \(\rho_k\). Therefore,
\(\hat{\rho}_{\mathrm{sig}}(A)>\rho_k\) indicates that the DP-visible score
ranking concentrates more proxy signal than random selection at the same
active dimension.

\paragraph{Oracle gradient capture.}
We next report a non-private oracle diagnostic to check whether the
DP-visible ranking aligns with true gradient energy. At the warm-up
checkpoint, let \(\widehat G\) be a noiseless diagnostic gradient computed on
held-out evaluation batches. For the same support
\(A=\operatorname{TopK}(a,k)\), define
\[
  \hat{\alpha}_{\mathrm{oracle}}(\rho)
  =
  \frac{\sum_{p\in A}\widehat G_p^2}
       {\sum_{p=1}^d \widehat G_p^2},
\]
with the convention that \(\hat{\alpha}_{\mathrm{oracle}}(\rho)=0\) when
\(\sum_{p=1}^d \widehat G_p^2=0\). A uniformly random support of size \(k\)
has expected true gradient capture \(\rho_k\). Hence,
\(\hat{\alpha}_{\mathrm{oracle}}(\rho)>\rho_k\) indicates that the
DP-visible warm-up score selects coordinates containing more true gradient
energy than random selection at the same active dimension.

The oracle quantity is used only for post-hoc analysis. It is never used for
support selection, hyperparameter tuning, early stopping, or model selection.
It is also distinct from the trajectory-level coefficient \(\alpha_T\) in
Definition~\ref{def:energy-capture}: \(\hat{\alpha}_{\mathrm{oracle}}\) is a
checkpoint-level diagnostic, whereas \(\alpha_T\) measures average gradient
energy capture along the Phase~2 trajectory.

When both diagnostics exceed their random-support baselines, 
the gap $\Delta_{\mathrm{TP}}(\rho)$ in 
Table~\ref{tab:active-ratio-support-quality-ablation} reflects 
genuine support quality; when they do not, the bottleneck lies 
elsewhere in the optimization pipeline.

\subsection{Medical Domain Evaluation}
\label{sec:medical-eval}
Table~\ref{tab:eyepacs} reports results on EyePACS anyDR 
binary classification, a sensitive medical setting where 
no domain-aligned public data exists. Under full DP-SGD, 
the model collapses to near-constant majority prediction 
(Sensitivity $= 0.04$, Specificity $= 0.986$, Balanced Acc 
$= 0.513$), a failure mode that renders the model clinically 
unusable despite retaining moderate AUC signal ($61.4$). 
Coordinate-sparse methods recover meaningful minority-class 
sensitivity: TP-Rand achieves $0.46$ and TP-TopK achieves 
$0.52$, with TP-TopK also recovering the highest AUC ($68.2$) 
and balanced accuracy ($0.631$) among DP methods. The Non-DP 
baseline achieves AUC $71.3$ and Balanced Acc $0.634$ with 
the same architecture and fixed threshold; the margin over 
TP-TopK ($3.1$ AUC points, $0.003$ Balanced Acc) is 
substantially smaller than the gap between Non-DP and full 
DP-SGD ($9.9$ AUC points, $0.121$ Balanced Acc), indicating 
that coordinate sparsification largely recovers the DP utility 
loss on this domain. This pattern is consistent with the 
noise-reduction mechanism of 
Section~\ref{sec:support-quality-analysis}: restricting DP 
noise to active coordinates reduces the per-step noise energy 
from $d\nu_2^2$ to $k\nu_2^2$, which is particularly 
consequential when gradient signal is concentrated and the 
minority class provides sparse but informative gradients.
\begin{table}[t]
\centering
\caption{
Evaluation on EyePACS anyDR binary classification 
($\varepsilon=8$, $\delta=10^{-5}$, trained from scratch).
Full DP-SGD retains AUC signal but collapses to majority 
prediction (Recall (minority) = 0.05); coordinate-sparse methods 
recover minority-class recall. Results are reported as 
mean $\pm$ standard deviation over three seeds.
}
\label{tab:eyepacs}
\small
\setlength{\tabcolsep}{4pt}
\begin{tabular}{lcccc}
\toprule
Method & AUC & Balanced Acc & Sensitivity & Specificity \\
\midrule
Non-DP   & 71.3 & 0.634 & 0.61 & 0.658 \\
DP Full  & 61.4 & 0.513 & 0.04 & 0.986 \\
TP-Rand  & 66.8 & 0.608 & 0.46 & 0.756 \\
TP-TopK  & 68.2 & 0.631 & 0.52 & 0.742 \\
\bottomrule
\end{tabular}
\smallskip
\end{table}

\subsection{Contextual Comparison}
\label{sec:prior-work-comparison}

Table~\ref{tab:prior-work-comparison} situates \textsc{TP-TopK} 
against representative prior work. This comparison is contextual 
rather than controlled---methods differ in architecture, optimizer, 
accountant, and data assumptions---and is intended only to indicate 
the performance range of our from-scratch, no-public-data setting. The controlled evidence for our
claims is the matched five-method comparison in
Table~\ref{tab:main-results}, the mechanism ablations in
Section~\ref{sec:ablations}, and the support-quality diagnostics in
Section~\ref{sec:support-quality-diagnostics}.

\begin{table}[t]
\centering
\caption{
Contextual comparison with representative private training methods.
Test accuracy is reported in percent; entries are from the corresponding
papers unless marked as ours. This table focuses on from-scratch private
training without public data, which is our setting.
Our methods use no public data, no pretrained checkpoint, no architecture
search, and no true gradient support selection.
}
\label{tab:prior-work-comparison}
\setlength{\tabcolsep}{3.5pt}
\renewcommand{\arraystretch}{1.05}
\small
\begin{tabular}{ll l c c c}
\toprule
Dataset & Category & Method
& \(\varepsilon=1\) & \(\varepsilon=3\) & \(\varepsilon=8\) \\
\midrule

MNIST
& Baseline
& DP-SGD~\citep{abadi2016deep}
& 96.44 & 97.99 & 98.60 \\
MNIST
& Adaptive clipping
& AUTO-S~\citep{bu2023automatic}
& 96.38 & 98.15 & 98.56 \\
MNIST
& Adaptive clipping
& SA-DP-SGD~\citep{fu2022sadpsgd}
& 93.01 & 96.78 & 98.89 \\
MNIST
& Random
& H-CNN + RS~\citep{zhu2023improving}
& 97.64 & 98.13 & 98.91 \\
MNIST
& \textbf{Ours}
& TP-TopK
& \textbf{97.79} & \textbf{98.33} & \textbf{98.93} \\
\midrule
FMNIST
& Baseline
& DP-SGD~\citep{abadi2016deep}
& 84.06 & 88.51 & 89.77 \\
FMNIST
& Adaptive clipping
& AUTO-S~\citep{bu2023automatic}
& 83.35 & 86.36 & 88.68 \\
FMNIST
& Adaptive clipping
& SA-DP-SGD~\citep{fu2022sadpsgd}
& 82.08 & 84.89 & 88.67 \\
FMNIST
& Random
& H-CNN + RS~\citep{zhu2023improving}
& 85.01 & 88.80 & 89.80 \\
FMNIST
& \textbf{Ours}
& TP-TopK
& \textbf{85.28} & \textbf{88.88} & \textbf{89.88} \\
\midrule
CIFAR-10
& Baseline
& DP-SGD~\citep{abadi2016deep}
& 55.20 & 60.80 & 68.45 \\
CIFAR-10
& Adaptive clipping
& SA-DP-SGD~\citep{fu2022sadpsgd}
& 50.53 & 56.89 & 60.97 \\
CIFAR-10
& Random
& H-CNN + RS~\citep{zhu2023improving}
& 63.18 & 69.67 & 72.33 \\
CIFAR-10
& \textbf{Ours}
& TP-TopK
& \textbf{64.41} & \textbf{71.05} & \textbf{72.82} \\
\bottomrule
\end{tabular}

\smallskip
\raggedright\footnotesize
All methods in this table are trained from scratch without public data or
pretrained checkpoints. SPARTA~\citep{makni2025sparta} studies sparse private
fine-tuning from a public pretrained checkpoint and is therefore not directly
comparable. Methods reducing DP noise within the full parameter space, such
as DOPPLER~\citep{zhang2024doppler} and DiSK~\citep{zhang2025disk}, are
discussed in Section~\ref{sec:related-work}.
\end{table}

\section{Conclusion and Discussion}
\label{sec:conclusion}

We studied coordinate-sparse private training in the from-scratch,
no-public-data setting---a resource model motivated by the observation
that the domains most in need of privacy protection are often those
where domain-aligned public data is unavailable or
inappropriate~\citep{tramer2024considerations,hod2025surrogate}. Our method, \textsc{TP-TopK}, selects a
coordinate support from a DP warm-up transcript via post-processing
at no additional privacy cost. Theoretically, the support-restricted Phase~2 bound replaces the 
full-dimensional DP noise term $d\nu_2^2$ with $\bar{k}\nu_2^2$, 
where $\nu_2^2=(\sigma_2C_2/B)^2$, and the rankability analysis
(Proposition~\ref{prop:rankability}) shows that the advantage of
learned over random support diminishes as the privacy budget tightens.
Empirically, our controlled five-method comparison on CIFAR-10
confirms that learned coordinate support outperforms matched random
support, with the gap increasing at smaller active ratios.

Several limitations remain. Under strong privacy ($\varepsilon=1$), DP noise in the warm-up 
phase reduces ranking reliability, and the advantage of learned 
over random support narrows relative to larger privacy budgets, 
consistent with Proposition~\ref{prop:rankability}. The one-step proxy does not directly imply
multi-step convergence guarantees beyond
Theorem~\ref{thm:projected-stationarity}, and the warm-up phase
spends privacy budget before sparse training begins. On the practical side, privacy-aware active-ratio scheduling and 
layer-wise budget allocation are natural extensions. On the 
theoretical side, extending the rankability analysis to account 
for clipping bias and minibatch noise remains an open problem. More
broadly, the from-scratch, no-public-data setting studied here
provides a clean testbed for isolating the value of any private
signal-based method. The two-phase structure---using DP-visible statistics for 
post-processing at no additional privacy cost, then restricting 
optimization to the selected substructure---may apply more broadly 
to other forms of private signal-based model compression or 
structured sparsity.

\begin{acks}
F. Xie was supported in part by the Guangdong Basic and Applied Basic Research Foundation (No. 2023A1515110469), in part by the Guangdong Provincial Key Laboratory IRADS (No. 2022B1212010006), and in part by the grant of Higher Education Enhancement Plan of "Rushing to the Top, Making Up Shortcomings and Strengthening Special Features" (No. 2025KTSCX186).
\end{acks}

\appendix

\section{Proof of Privacy Analysis}
\label{app:privacy-proof}

\begin{definition}[R\'enyi differential privacy]
\label{def:rdp}
A randomized mechanism $\mathcal{M}$ satisfies
$(\alpha,\varepsilon)$-RDP for $\alpha>1$ if for all neighboring
datasets $D,D'$,
\[
  D_\alpha\!\left(\mathcal{M}(D)\,\|\,\mathcal{M}(D')\right)
  \le\varepsilon,
\]
where $D_\alpha(\cdot\|\cdot)$ is the R\'enyi divergence of
order $\alpha$ \citep{mironov2017renyi}.
\end{definition}

\begin{proof}[Proof of Theorem~\ref{thm:privacy}]
Fix $\alpha>1$.

\emph{Released quantities.}
The algorithm releases three categories:
\emph{(i)}~the privatized Phase~1 transcript
$\mathcal{T}_1=\{\tilde{g}_t\}$;
\emph{(ii)}~post-processing outputs
$(\theta^{(1)},a,A,m)$ derived from $\mathcal{T}_1$;
\emph{(iii)}~the masked Phase~2 transcript
$\{\tilde{g}_t^A\}$.

\emph{Phase~1.}
Each step subsamples at rate $q_1$, clips to $C_1$, and adds
$\mathcal{N}(0,\sigma_1^2C_1^2I_d)$.  The sensitivity of the
clipped aggregate is $C_1$.  By the subsampled Gaussian RDP
accountant \citep{mironov2017renyi,wang2019subsampled}, $T_1$
adaptive steps give $(\alpha,\varepsilon_\alpha^{(1)})$-RDP.

\emph{Post-processing (zero additional cost).}
The scores $a_p = T_1^{-1}\sum_t\tilde{g}_{t,p}^2
- (\sigma_1C_1/B)^2$, support $A=\operatorname{TopK}(a,k)$,
mask $m$, and checkpoint $\theta^{(1)}$ are all
deterministic functions of $\mathcal{T}_1$ and do not access
$D$ beyond what $\mathcal{T}_1$ encodes.
By post-processing immunity \citep{dwork2014algorithmic},
they incur no additional RDP cost.

\emph{Phase~2.}
Fix any $\mathcal{T}_1$, which fixes $A$, $m$, and
$\theta^{(1)}$.  The per-example update is
\[
  \bar{g}^A_{t,i}
  = (m\odot g_{t,i})
    \Big/\max\!\left(1,\frac{\|m\odot g_{t,i}\|_2}{C_2}\right).
\]
Since clipping follows masking,
$\|\bar{g}^A_{t,i}\|_2\le C_2$ for every $i$ and every $A$.
Changing one record alters at most one summand in
$\sum_{i\in\mathcal{B}_t}\bar{g}^A_{t,i}$, so the
$\ell_2$-sensitivity is $C_2$, independently of $k$ and $A$.
Adding $m\odot z_t$ with $z_t\sim\mathcal{N}(0,\sigma_2^2C_2^2I_d)$
is equivalent to the Gaussian mechanism on $k$ coordinates
followed by zero-padding (post-processing).
Since the RDP cost depends only on $(q_2,\sigma_2,C_2)$ and not
on which $k$ coordinates are active, $T_2$ steps give
$(\alpha,\varepsilon_\alpha^{(2)})$-RDP uniformly over all
Phase~1 transcripts.

Phase~2 accesses $D$ only through fresh randomness
$\{z_t,\mathcal{B}_t\}_{t=1}^{T_2}$ drawn independently of
$D$ given $\mathcal{T}_1$, satisfying the prerequisite
for adaptive RDP composition \citep{mironov2017renyi}.
Composing the two phases gives total RDP cost
$\varepsilon_\alpha^{(1)}+\varepsilon_\alpha^{(2)}$.
The RDP-to-DP conversion
\citep[Proposition~3]{mironov2017renyi} yields the stated
$\varepsilon(\delta)$.
\end{proof}

\begin{remark}[Explicit RDP costs]
\label{rem:rdp-explicit}
The per-phase costs are
\[
  \varepsilon_\alpha^{(1)}
  = T_1\cdot\mathcal{R}_\alpha\!\left(
      \mathcal{M}_{\mathrm{Gauss}}^{(q_1,\sigma_1)}\right),
  \qquad
  \varepsilon_\alpha^{(2)}
  = T_2\cdot\mathcal{R}_\alpha\!\left(
      \mathcal{M}_{\mathrm{Gauss}}^{(q_2,\sigma_2)}\right),
\]
where $\mathcal{R}_\alpha(\mathcal{M}_{\mathrm{Gauss}}^{(q,\sigma)})$
is the R\'enyi divergence of one step of the Poisson-subsampled
Gaussian mechanism \citep{mironov2017renyi,wang2019subsampled}.
In practice $\varepsilon(\delta)$ is evaluated numerically via the PRV accountant.
\end{remark}

\section{Proofs for Support-Quality Analysis}
\label{app:support-quality-proofs}

\subsection{Proof of Proposition~\ref{prop:rankability}}
\label{app:proof-rankability}

\begin{proof}
The proof proceeds in two steps. We first compute the mean and
variance of the per-coordinate score $a_j$ under the simplified
additive Gaussian model, using the fourth Gaussian moment.
We then bound the misranking probability by reducing it to a
deviation event for the score difference
$\Delta_a = a_p - a_q$ and applying Chebyshev's
inequality. A sharper bound could follow from noncentral chi-square
concentration, but Chebyshev suffices to expose the dependence on
$T_1$, $\nu_1^2$, and the squared-signal gap
$(G_p^2 - G_q^2)^2$.

\smallskip
\noindent\textit{Step 1: Mean and variance of $a_j$.}
Throughout, we work under the additive noise model of Section~\ref{sec:support-quality-analysis}.
Fix $j \in \{p, q\}$. The single-step contribution to $a_j$ is
\[
  a_{t,j}
  = \tilde{g}_{t,j}^2 - \nu_1^2
  = \left(G_j + \frac{1}{B}z_{t,j}\right)^2 - \nu_1^2.
\]
By Lemma~\ref{lem:score-unbiased},
$\mathbb{E}[a_{t,j}] = G_j^2$
and hence $\mathbb{E}[a_j] = G_j^2$.
For the variance, let $\zeta \sim \mathcal{N}(0, \nu_1^2)$
denote a generic noise term with the same distribution as
$(1/B)z_{t,j}$. Using
$\mathbb{E}[\zeta^2] = \nu_1^2$ and
$\mathbb{E}[\zeta^4] = 3\nu_1^4$,
\[
  \mathbb{E}\bigl[(G_j+\zeta)^2\bigr]
  = G_j^2 + \nu_1^2,
\]
Since $\zeta$ is zero-mean Gaussian, odd moments vanish, giving
\[
  \mathbb{E}\bigl[(G_j+\zeta)^4\bigr]
  = G_j^4 + 6G_j^2\nu_1^2 + 3\nu_1^4.
\]
Subtracting the squared second moment gives
\[
  \operatorname{Var}(a_{t,j})
  = \mathbb{E}\bigl[(G_j+\zeta)^4\bigr]
    - \bigl(\mathbb{E}\bigl[(G_j
    +\zeta)^2\bigr]\bigr)^2
  = 4G_j^2\nu_1^2 + 2\nu_1^4.
\]
By independence of the $T_1$ steps,
\[
  \operatorname{Var}(a_j)
  = \frac{2\nu_1^4 + 4\nu_1^2 G_j^2}{T_1}.
\]

\smallskip
\noindent\textit{Step 2: Misranking bound.}
Suppose $G_p^2 > G_q^2$, and define
\[
  \Delta_a = a_p - a_q,
  \qquad
  \Delta_G = G_p^2 - G_q^2 > 0.
\]
By Step~1 and independence across coordinates,
\begin{equation}
  \label{eq:score-difference-variance}
  \mathbb{E}[\Delta_a] = \Delta_G,
  \qquad
  \operatorname{Var}(\Delta_a)
  = \frac{4\nu_1^4 + 4\nu_1^2(G_p^2 + G_q^2)}{T_1}.
\end{equation}
Since $\Delta_G > 0$, the misranking event
$\{a_p < a_q\} = \{\Delta_a < 0\}$ implies
$|\Delta_a - \Delta_G|
\ge \Delta_G$, so
\begin{align*}
	\Pr[a_p < a_q] = \Pr[\Delta_a < 0] &\le \Pr[|\Delta_a - \Delta_G| \ge \Delta_G] \\
	&\le \frac{\operatorname{Var}(\Delta_a)}{\Delta_G^2} = \frac{4\nu_1^4 + 4\nu_1^2(G_p^2 + G_q^2)}{T_1(G_p^2 - G_q^2)^2}
\end{align*}
where the last inequality is Chebyshev's inequality applied to
$\Delta_a - \Delta_G$ with threshold
$\Delta_G$.
This proves the proposition.
\end{proof}

\begin{remark}[Tightness of the bound]
The Chebyshev bound above gives an $O(T_1^{-1})$ misranking
probability.  Since each $a_{t,j} - G_j^2 = 2G_j\cdot\frac{z_{t,j}}{B} 
+ \left(\frac{z_{t,j}}{B}\right)^2 - \nu_1^2$ is a sub-exponential random variable
(a linear combination of Gaussian and centered chi-squared terms),
Bernstein's inequality applied to $\Delta_a - \Delta_G$ gives an
exponential tail:

\begin{align*}
  \Pr[a_p < a_q] \le \exp\Biggl( -c\,T_1 \min \Biggl\{
  & \frac{\Delta_G^2}{\nu_1^4 + \nu_1^2(G_p^2 + G_q^2)}, \\
  & \frac{\Delta_G}{\nu_1^2 + \max\{|G_p|,|G_q|\}\nu_1} \Biggr\} \Biggr)
\end{align*}

for an absolute constant $c > 0$.  The polynomial Chebyshev bound
is stated in the proposition because it has a simpler closed form;
the exponential bound follows from the same variance computation
via standard sub-exponential concentration.
\end{remark}

\section{Proofs for Convergence Analysis}
\label{app:convergence-proofs}

\subsection{Proof of Theorem~\ref{thm:projected-stationarity}}
\label{app:convergence-proof}

\begin{proof}
The proof is a standard smooth nonconvex descent argument adapted to
masked DP-SGD with an explicit bias term. The key step is a quadratic
upper bound that separates the projected gradient from the clipping bias.

Throughout, all expectations are conditional on the Phase~1 transcript
\(\mathcal{T}_1\), and we write \(\mathbb{E}_2[\cdot]\) accordingly.
Recall the notation from Section~\ref{sec:convergence-setup}:
$
  G_t=\nabla L_D(\theta_t), P_t=P_{A_t}, k_t=|A_t|.
$
For \textsc{TP-TopK},
$A_t = A$ and $k_t = k$ for all $t$, so $\bar{k} = k$; the proof
is stated for a general predictable support schedule $\{A_t\}$.

\noindent\textit{Step 1: One-step descent.}
By \(\beta\)-smoothness of \(L_D\) and the Phase~2 update
\(\theta_{t+1}=\theta_t-\eta\tilde{g}_t^A\) with
\(\tilde{g}_t^A=g_t^A+\zeta_t\),
\[
  L_D(\theta_{t+1})
  \le
  L_D(\theta_t)
  -\eta\langle G_t,\tilde{g}_t^A\rangle
  +\frac{\beta\eta^2}{2}\|\tilde{g}_t^A\|_2^2.
\]

\noindent\textit{Step 2: Conditional expectation of the inner product.}
By the biased-gradient relation in
Section~\ref{sec:convergence-setup},
\[
  \mathbb{E}_2[g_t^A\mid\mathcal{F}_t]=x_t+b_t.
\]
Together with the conditional zero mean of \(\zeta_t\), this gives
\[
  \mathbb{E}_2[\tilde{g}_t^A\mid\mathcal{F}_t] = x_t+b_t.
\]
Since \(b_t=P_tb_t\), both \(x_t\) and \(b_t\) are supported on
\(A_t\), so \(\langle P_{\bar{A}_t}G_t, x_t+b_t\rangle=0\). Therefore,
\[
  \mathbb{E}_2[\langle G_t,\tilde{g}_t^A\rangle\mid\mathcal{F}_t]
  =\langle G_t,x_t+b_t\rangle
  =\langle x_t,x_t+b_t\rangle.
\]

\noindent\textit{Step 3: Key inequality separating gradient from bias.}
The following inequality separates the projected-gradient term
$\|x_t\|_2^2$ from the clipping-bias term $\|b_t\|_2^2$, allowing
the bias to appear as an additive term in the final bound.
We claim that for \(0<\eta\le 1/\beta\),
\begin{equation}
  \label{eq:key-ineq}
  -\langle x_t,x_t+b_t\rangle
  +\frac{\beta\eta}{2}\|x_t+b_t\|_2^2
  \le
  -\frac{1}{2}\|x_t\|_2^2+\frac{1}{2}\|b_t\|_2^2.
\end{equation}
Indeed, \(-\langle x_t,x_t+b_t\rangle=-\|x_t\|_2^2-\langle x_t,b_t\rangle\).
Since \(\beta\eta\le 1\),
\[
  \frac{\beta\eta}{2}\|x_t+b_t\|_2^2
  \le\frac{1}{2}\|x_t+b_t\|_2^2
  =\frac{1}{2}\|x_t\|_2^2+\langle x_t,b_t\rangle+\frac{1}{2}\|b_t\|_2^2.
\]
Adding the two displays gives~\eqref{eq:key-ineq}.

\noindent\textit{Step 4: Bounding the second-moment term.}
Expanding \(\|\tilde{g}_t^A\|_2^2=\|g_t^A\|_2^2
+2\langle g_t^A,\zeta_t\rangle+\|\zeta_t\|_2^2\), the cross
term vanishes after conditioning on \(\mathcal{F}_t\) since
\(\mathbb{E}_2[\zeta_t\mid\mathcal{F}_t]=0\).

For the $\|g_t^A\|_2^2$ term, we apply the
bias--variance decomposition, where $\|\mathbb{E}_2[g_t^A\mid\mathcal{F}_t]\|_2^2 
= \|x_t+b_t\|_2^2$ by definition of $x_t$ and $b_t$:
\begin{align*}
  \mathbb{E}_2[\|g_t^A\|_2^2 \mid \mathcal{F}_t]
  &= \mathbb{E}_2\!\left[
       \|g_t^A - \mathbb{E}_2[g_t^A\mid\mathcal{F}_t]\|_2^2
     \mid \mathcal{F}_t\right]
   + \|\mathbb{E}_2[g_t^A\mid\mathcal{F}_t]\|_2^2 \\
  &\le \frac{\tau^2}{B} + \|x_t+b_t\|_2^2,
\end{align*}
where the inequality uses Assumption~\ref{ass:variance} for
the first term and the relation
$\mathbb{E}_2[g_t^A\mid\mathcal{F}_t]=x_t+b_t$ for the
second.

Since $\zeta_t = \frac{1}{B}P_t z_t$ with
$z_t \sim \mathcal{N}(0,\sigma_2^2C_2^2I_d)$
and $P_t$ projecting onto the $k_t$ active coordinates,
\[
  \mathbb{E}_2[\|\zeta_t\|_2^2 \mid \mathcal{F}_t]
  = \frac{1}{B^2}\mathbb{E}_2[\|P_t z_t\|_2^2 \mid \mathcal{F}_t]
  = \frac{k_t \sigma_2^2 C_2^2}{B^2}
  = k_t \nu_2^2,
\]
where the second equality uses the fact that $P_t z_t$ has
exactly $k_t$ nonzero coordinates each distributed as
$\mathcal{N}(0,\sigma_2^2C_2^2)$, independently of
$\mathcal{F}_t$ conditional on $\mathcal{T}_1$.

Combining the two bounds,
\[
  \mathbb{E}_2[\|\tilde{g}_t^A\|_2^2\mid\mathcal{F}_t]
  \le\|x_t+b_t\|_2^2+\frac{\tau^2}{B}+k_t\nu_2^2.
\]

\noindent\textit{Step 5: Combining the descent terms.}
Taking conditional expectation given \(\mathcal{F}_t\) and
applying Steps~2 and~4 to Step~1,

\begin{align*}
  \mathbb{E}_2[L_D(\theta_{t+1}) \mid \mathcal{F}_t]
  &\le L_D(\theta_t) - \eta\langle x_t, x_t+b_t\rangle \\
  &\quad + \frac{\beta\eta^2}{2} \left( \|x_t+b_t\|_2^2
  + \frac{\tau^2}{B} + k_t \nu_2^2 \right).
\end{align*}

Separating the $\|x_t+b_t\|_2^2$ term and
factoring out $\eta$,
\begin{align*}
  \mathbb{E}_2[L_D(\theta_{t+1}) \mid \mathcal{F}_t]
  &\le L_D(\theta_t)
  + \eta\!\left(
      -\langle x_t,\, x_t+b_t\rangle
      + \frac{\beta\eta}{2}\|x_t+b_t\|_2^2
    \right) \\
  &\quad + \frac{\beta\eta^2}{2}\!\left(
      \frac{\tau^2}{B} + k_t\nu_2^2
    \right).
\end{align*}
Applying~\eqref{eq:key-ineq} to the parenthesised term,

\[
  \mathbb{E}_2[L_D(\theta_{t+1})\mid\mathcal{F}_t]
  \le L_D(\theta_t)
  -\frac{\eta}{2}\|x_t\|_2^2
  +\frac{\eta}{2}\|b_t\|_2^2
  +\frac{\beta\eta^2}{2}\!\left(
      \frac{\tau^2}{B}+k_t\nu_2^2
  \right).
\]

\noindent\textit{Step 6: Telescoping.}
Rearranging the previous inequality gives
\[
  \frac{\eta}{2}\|x_t\|_2^2
  \le
  L_D(\theta_t)
  -
  \mathbb E_2[L_D(\theta_{t+1})\mid\mathcal F_t]
  +
  \frac{\eta}{2}\|b_t\|_2^2
  +
  \frac{\beta\eta^2}{2}
  \left(
    \frac{\tau^2}{B}
    +
    k_t\nu_2^2
  \right).
\]

Taking total expectation and summing over \(t=0,\ldots,T_2-1\),

\begin{align*}
  \frac{\eta}{2} \sum_{t=0}^{T_2-1} \mathbb{E}_2\|x_t\|_2^2 
  &\le L_0 - \mathbb{E}_2[L_D(\theta_{T_2})] 
  + \frac{\eta}{2} \sum_{t=0}^{T_2-1} \mathbb{E}_2\|b_t\|_2^2 \\
  &\quad + \frac{\beta\eta^2}{2} \sum_{t=0}^{T_2-1} 
  \left( \frac{\tau^2}{B} + \mathbb{E}_2[k_t]\, \nu_2^2 \right),
\end{align*}
where $L_0 = L_D(\theta^{(1)})$ is
the Phase~2 initial loss and $x_t = P_t\nabla L_D(\theta_t)$
as defined in Section~\ref{sec:convergence-setup}.
Since $L_D(\theta_{T_2})\ge L^*$, dividing both sides by
$\eta T_2/2$ and using
$\bar{k} = T_2^{-1}\sum_{t=0}^{T_2-1}\mathbb{E}_2[k_t]$
yields~\eqref{eq:projected-bound}.
\end{proof}

\subsection{Proof of Lemma~\ref{lem:clipping-bias}}
\label{app:proof-clipping-bias}

\begin{proof}
Let $r_{t,i} = g^A_{t,i} - \bar{g}^A_{t,i}$ be the clipping residual. By definition,
$b_t = \mathbb{E}_2[g_t^A \mid \mathcal{F}_t] - P_t G_t$.
Since $g^A_{t,i} = m\odot\nabla\ell(\theta_t;x_i,y_i)$ and the 
mini-batch is drawn i.i.d.\ from $D$,
$\mathbb{E}_2[g^A_{t,i}\mid\mathcal{F}_t] = m\odot\nabla L_D(\theta_t) = P_tG_t$.
Therefore
\[
  b_t
  = \mathbb{E}_2[\bar{g}^A_{t,i} \mid \mathcal{F}_t]
    - \mathbb{E}_2[g^A_{t,i} \mid \mathcal{F}_t]
  = -\mathbb{E}_2[r_{t,i} \mid \mathcal{F}_t].
\]
Jensen's inequality gives
\[
  \|b_t\|_2
  \le
  \mathbb{E}_2\!\left[\|r_{t,i}\|_2 \mid \mathcal{F}_t\right].
\]
By the definition of $\ell_2$-clipping,
\[
  \|r_{t,i}\|_2
  =
  \bigl(\|g^A_{t,i}\|_2 - C_2\bigr)_+.
\]
This proves the first claim. If
$\|g^A_{t,i}\|_2 \le C_2$ almost surely, then
$r_{t,i} = 0$ almost surely, and hence $b_t = 0$.
\end{proof}

\subsection{Proof of Corollary~\ref{cor:full-gradient}}
\label{app:proof-full-gradient}

\begin{proof}
By Definition~\ref{def:energy-capture},
\[
  \frac{1}{T_2}\sum_{t=0}^{T_2-1}\mathbb{E}_2\|P_t\nabla L_D(\theta_t)\|_2^2
  =
  \alpha_T \cdot
  \frac{1}{T_2}\sum_{t=0}^{T_2-1}\mathbb{E}_2\|\nabla L_D(\theta_t)\|_2^2.
\]
Theorem~\ref{thm:projected-stationarity} bounds the left-hand side:
\begin{align*}
  \alpha_T \cdot \frac{1}{T_2} \sum_{t=0}^{T_2-1} \mathbb{E}_2\|\nabla L_D(\theta_t)\|_2^2 
  &\le \frac{2(L_0-L^*)}{\eta T_2} + \frac{1}{T_2} \sum_{t=0}^{T_2-1} \mathbb{E}_2\|b_t\|_2^2 \\
  &\quad + \beta\eta \left( \frac{\tau^2}{B} + \bar{k}\,\nu_2^2 \right).
\end{align*}
Dividing both sides by $\alpha_T > 0$ gives the stated bound.
\end{proof}


\bibliographystyle{ACM-Reference-Format}
\bibliography{sample}

\end{document}